\documentclass{article} %
\usepackage{iclr2027_conference,times}

\usepackage{amsmath,amsfonts,bm}

\def\eqref#1{equation~\ref{#1}}

\def\1{\bm{1}}

\DeclareMathAlphabet{\mathsfit}{\encodingdefault}{\sfdefault}{m}{sl}
\SetMathAlphabet{\mathsfit}{bold}{\encodingdefault}{\sfdefault}{bx}{n}

\DeclareMathOperator*{\argmin}{arg\,min}

\usepackage{hyperref}
\usepackage{url}

\usepackage{graphicx}
\usepackage{booktabs}
\usepackage{tabularx}
\usepackage{amsmath}
\usepackage{amssymb}
\usepackage{amsthm}
\usepackage{mathtools}
\usepackage{wrapfig}

\usepackage{multicol}
\usepackage{multirow}

\usepackage[dvipsnames]{xcolor}

\usepackage[T1]{fontenc}

\usepackage[capitalize,noabbrev,nameinlink]{cleveref}

\theoremstyle{plain}
\newtheorem{theorem}{Theorem}[section]
\newtheorem{proposition}[theorem]{Proposition}

\newtheorem{lemma}[theorem]{Lemma}
\theoremstyle{definition}

\theoremstyle{remark}

\definecolor{BrickRed}{rgb}{0.6,0,0}
\definecolor{RoyalBlue}{rgb}{0,0,0.8}
\definecolor{Tdgreen}{rgb}{0,0.4,0.7}
\definecolor{cadmiumgreen}{rgb}{0.0, 0.42, 0.24}

\hypersetup{colorlinks, citecolor=cadmiumgreen, linkcolor=cadmiumgreen}
\usepackage{letterspace}

\title{%
  \textls[-20]{Groups Don't Move as One: A Bregman}\\
  \textls[-20]{Clustering View of Linearized Group Influence}%
}
\usepackage{calc}
\DeclareRobustCommand{\topast}{\raisebox{\heightof{$\scriptstyle 1$}-\heightof{$\scriptstyle\ast$}}{$\scriptstyle\ast$}}
\renewcommand{\fnsymbol}[1]{\ifcase\value{#1}\or\topast\or\dagger\or\ddagger\fi}

\author{%
    Hyunho Choi$^{1}$, Jaeseung Heo$^{2}$, Youngbin Choi$^{2}$, MoonJeong Park$^{2}$, Dongwoo Kim$^{1,2,}$\thanks{Corresponding author.}\\
    $^{1}$Department of Computer Science \& Engineering, POSTECH\\
    $^{2}$Graduate School of Artificial Intelligence, POSTECH\\
    {\small\texttt{\{hyunhochoi,jsheo12304,choi.youngbin,mjeongp,dongwoo.kim\}@postech.ac.kr}}
}

\iclrfinalcopy %
\begin{document}

\maketitle

\begin{abstract}
Influence functions provide an efficient alternative to retraining for assessing the effects of removing or reweighting training data.
For groups, however, the standard first-order estimate reduces individual contributions to a sum that can conceal substantial differences among members, including opposing effects that cancel.
We establish a Bregman centroid formulation of the linearized group influence objective, interpreting this aggregate as a single representative and using Bregman information (BI) to quantify the variation it omits.
Extending this formulation to multiple clusters reveals distinct contributions whose sum recovers the original estimate.
We further propose an estimator that jointly optimizes the weights of these cluster contributions, rather than simply summing them.
Experiments uncover cluster-specific influence patterns and a positive association between target-specific BI and approximation error.
The proposed estimator achieves higher $R^2$ than standard group influence in predicting target changes under the nonlinear proximal Bregman response function.
Simplified clustering and correction variants reduce computational cost.
The centroid view thus explains what additive group influence leaves out and provides a basis for using that information to improve estimation.
\end{abstract}  

\section{Introduction}
Influence functions estimate how removing or reweighting training data affects a trained model without retraining it for every perturbation~\citep{jaeckel1972infinitesimal,hampel1974influence,cook1977detection,koh2017understanding}.
By approximating changes in model parameters and quantities such as validation loss, they help us understand how training data contribute to model behavior.
Group influence extends this analysis from individual examples to subsets of training data, estimating the joint effect of removing or reweighting their members~\citep{basu2020second,koh2019accuracy,heo2026interactionawareinfluencefunctionsgroup}.

Standard first-order group influence estimates this joint effect by summing individual influence estimates~\citep{koh2019accuracy}.
However, the sum hides differences among group members~\citep{basu2020second,heo2026interactionawareinfluencefunctionsgroup}.
For example, a near-zero estimated change in validation loss may reflect uniformly weak individual contributions or large opposing contributions that cancel.
We seek to characterize this hidden structure and investigate whether accounting for it can improve group influence estimation.

We show that the linearized objective underlying group influence is equivalent to a Bregman centroid problem~\citep{banerjee2005clustering,nielsen2007centroid}, providing an alternative perspective grounded in Bregman geometry.
In this formulation, the group estimate is a single representative of scaled individual parameter-change estimates.
This connection identifies both what aggregation retains, their center, and what it hides, their variation around that center.
Bregman information (BI) provides a measure of this hidden variation.

We extend this view from one centroid to several through Bregman clustering~\citep{banerjee2005clustering}.
The resulting clusters reveal how different parts of the group reinforce or oppose one another, while their contributions still sum to the original group estimate.
Building on this decomposition, we propose an estimator that assigns a separate weight to each cluster's contribution.
We jointly optimize these weights using a refined approximation of how the group affects the model.
This keeps the centroids fixed while allowing the relative contributions of different clusters to change.

Our experiments reveal distinct cluster-wise influence patterns hidden by the aggregate and show a positive association between target-specific BI and approximation error.
The proposed estimator achieves higher $R^2$ than standard group influence when evaluated against target changes under the nonlinear proximal Bregman response function (PBRF)~\citep{bae2022if}.
We also examine simpler clustering and correction schemes that reduce computation.
Together, these findings show that examining within-group structure can support both a finer understanding of group influence and more accurate estimation.

\section{Preliminaries}
\label{sec:prelim}

We introduce additive group influence and the proximal Bregman response function it approximates.
Related work is discussed in~\cref{app:related-work}.

\subsection{Influence Functions and Additive Group Influence}
\label{sec:prelim-if}

Let $\mathcal D_{\mathrm{train}}=\{(x_i,y_i)\}_{i=1}^{N}$ be the training set and $h_\theta$ a model with parameters $\theta\in\mathbb R^d$.
The per-example loss $\ell(u,y)$ is twice differentiable and convex in the model output $u$.
We write $L_i(\theta)\coloneq\ell(h_\theta(x_i),y_i)$ and $\mathcal J(\theta)\coloneq N^{-1}\sum_{i=1}^{N}L_i(\theta)$.
The \emph{original model}, trained before any data removal or reweighting, has parameters $\theta_0$.
All gradients $z_i\coloneq\nabla_\theta L_i(\theta_0)$ and curvature matrices below are evaluated at these parameters.

For a nonempty group $\mathcal S\subseteq[N]$, uniform reweighting gives
\[
    \theta^\star_{\mathcal S}(\epsilon)
    \coloneq\argmin_\theta
    \left\{
        \mathcal J(\theta)
        +\epsilon\sum_{i\in\mathcal S}L_i(\theta)
    \right\}.
\]
We call $\epsilon\sum_{i\in\mathcal S}L_i(\theta)$ the \emph{group-reweighting term}.
It changes the weights of the group's existing training losses rather than introducing a separate loss function.
We refer to the resulting change in model parameters,
$\theta^\star_{\mathcal S}(\epsilon)-\theta_0$,
as the model's \emph{parameter response} to this group reweighting.
Setting $\epsilon=-1/N$ removes the group's loss terms,
with $|\mathcal S|=1$ recovering leave-one-out removal.

Under the classical regularity conditions, influence functions
approximate this parameter response to first order in
$\epsilon$~\citep{hampel1974influence,cook1982residuals,koh2017understanding}:
\begin{equation}
    \theta^\star_{\mathcal S}(\epsilon)-\theta_0
    \;\approx\;
    -\epsilon H^{-1}\sum_{i\in\mathcal S}z_i,
    \qquad
    H\coloneq\nabla_\theta^2\mathcal J(\theta_0).
    \label{eq:classical-if}
\end{equation}
The derivation and its regularity conditions are given
in~\cref{app:classical-group-influence}.

For neural networks, $H$ can be singular or indefinite~\citep{bae2022if}.
We focus on the estimate obtained by replacing it with the damped generalized Gauss-Newton (GGN) matrix~\citep{schraudolph2002fast,martens2010deep,botev2017practical}
\[
    G\coloneq\frac{1}{N}\sum_{v=1}^{N}J_v^\top H_vJ_v+\lambda I_d,
\]
where $J_v$ is the Jacobian of $h_\theta(x_v)$ and $H_v$ is the Hessian of $\ell(\cdot,y_v)$ at $h_{\theta_0}(x_v)$.
Convexity of the loss in its output and $\lambda>0$ ensure $G\succ0$~\citep{martens2010deep}.
The resulting group estimate, denoted by $\widehat\theta_{\mathcal S}(\epsilon)$, is additive because the same inverse-curvature matrix acts on every member gradient~\citep{koh2019accuracy}:
\begin{equation}
\begin{aligned}
    \widehat\theta_{\mathcal S}(\epsilon)-\theta_0
    &=-\epsilon G^{-1}\sum_{i\in\mathcal S}z_i\\
    &=\sum_{i\in\mathcal S}
      \bigl(\widehat\theta_{\{i\}}(\epsilon)-\theta_0\bigr),
    \qquad
    \widehat\theta_{\{i\}}(\epsilon)=\theta_0-\epsilon G^{-1}z_i.
\end{aligned}
    \label{eq:additive-group-response}
\end{equation}
For a differentiable scalar target $f$, we seek to estimate
how its value changes under training-group reweighting.
For example, a held-out example $(x',y')$, which we call a
\emph{query}, defines the target
$f(\theta)=\ell(h_\theta(x'),y')$~\citep{koh2017understanding,heo2026interactionawareinfluencefunctionsgroup}.
Our formulation also allows aggregate targets, such as the
mean loss on a held-out set.
Standard group influence (GI) uses the estimated parameter
response to predict the corresponding target change:
\begin{equation}
    \widehat A_{\mathcal S}(\epsilon)
    \coloneq\nabla_\theta f(\theta_0)^\top
    \bigl(\widehat\theta_{\mathcal S}(\epsilon)-\theta_0\bigr)
    =-\epsilon\sum_{i\in\mathcal S}q^\top z_i,
    \qquad
    q\coloneq G^{-1}\nabla_\theta f(\theta_0).
    \label{eq:standard-group-influence}
\end{equation}
We refer to $q$ as the \emph{target direction} associated with $f$.

\subsection{PBRF and Its Linearized Surrogate}
\label{sec:prelim-pbrf}

\citet{bae2022if} show that damped-GGN influence estimates
approximate the parameter response defined by the proximal
Bregman response function (PBRF) to first order in the
reweighting strength.
Writing this construction for a reweighted group
$\mathcal S$ gives the objective
\begin{equation}
    Q^{\mathrm{PBRF}}_{\mathcal S}(\theta;\epsilon)
    \coloneq
    \frac{1}{N}\sum_{v=1}^{N}
    D_{\ell(\cdot,y_v)}
    \bigl(h_\theta(x_v),h_{\theta_0}(x_v)\bigr)
    +\epsilon\sum_{i\in\mathcal S}L_i(\theta)
    +\frac{\lambda}{2}\|\theta-\theta_0\|_2^2,
    \label{eq:pbrf}
\end{equation}
where
$D_\phi(u,v)\coloneq
\phi(u)-\phi(v)-\nabla\phi(v)^\top(u-v)$
is the Bregman divergence generated by a differentiable
convex function $\phi$~\citep{bregman1967relaxation, banerjee2005clustering}.
The first and third terms penalize changes in model outputs
and parameters, respectively, while the second is the
group-reweighting term introduced in~\cref{sec:prelim-if}.
Following this characterization, we use the nonlinear PBRF
response as our evaluation reference, rather than the
response to the reweighted training objective.

Since $\theta_0$ minimizes the PBRF objective at $\epsilon=0$, we use a local surrogate around $\theta_0$.
Dropping the parameter-independent constant
$\epsilon\sum_{i\in\mathcal S}L_i(\theta_0)$ gives the
\emph{linearized group PBRF objective}:
\begin{equation}
    Q^{\mathrm{lin}}_{\mathcal S}(\theta;\epsilon)
    \coloneq D_{\Phi_G}(\theta,\theta_0)
    +\epsilon c_{\mathcal S}^\top(\theta-\theta_0),
    \qquad
    \Phi_G(\theta)\coloneq\tfrac12\theta^\top G\theta,
    \quad
    c_{\mathcal S}\coloneq\sum_{i\in\mathcal S}z_i.
    \label{eq:pbrf-lin}
\end{equation}
Here $D_{\Phi_G}(\theta,\theta')=\tfrac12(\theta-\theta')^\top G(\theta-\theta')$.
Since $G\succ0$, the unique minimizer is $\widehat\theta_{\mathcal S}(\epsilon)=\theta_0-\epsilon G^{-1}c_{\mathcal S}$, recovering~\cref{eq:additive-group-response}.
The second-order expansion of the first term,
together with the third term, has curvature $G$ at
$\theta_0$.
Thus, $D_{\Phi_G}$ expresses this local penalty in parameter
space, whereas $D_{\ell(\cdot,y_v)}$ in~\cref{eq:pbrf}
acts in model-output space. The derivation and the local regularity conditions relating
this estimate to the first-order PBRF response are given
in~\cref{app:pbrf-linearization}.

\begin{figure}[t]
  \centering
  \includegraphics[width=\linewidth]{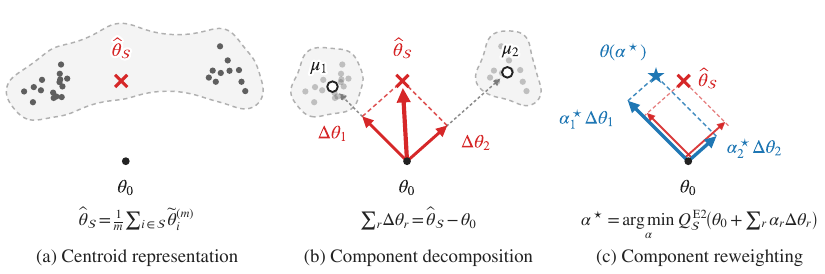}
  \caption{
    Illustration of the Bregman clustering view of linearized group influence.
    \textbf{(a)} Standard GI summarizes member-wise response points by a single centroid, hiding within-group structure.
    \textbf{(b)} Bregman clustering reveals distinct component contributions while preserving the original aggregate estimate.
    \textbf{(c)} Joint E2 reweights these contributions under a corrected objective, yielding a refined estimate (blue star).
    }
  \label{fig:illustration}
\end{figure}

\section{A Bregman Clustering View of Group Influence}
\label{sec:method}

The additive estimate in~\cref{eq:additive-group-response} describes the group's overall response but not the variation among its members, as illustrated in~\cref{fig:illustration}(a).
We characterize this aggregation by showing that the linearized group PBRF objective is a Bregman centroid objective.
This identifies both the single representative retained by GI and the dispersion it omits, and leads to a multi-component representation that preserves the original estimate, as illustrated in~\cref{fig:illustration}(b).

\subsection{Group Influence as a Bregman Centroid}
\label{sec:method-point-cloud}
\label{sec:method-clustering}

Let $m\coloneq|\mathcal S|$.
To expose the member-wise structure of~\cref{eq:pbrf-lin}, first write it as an average:
\begin{equation}
    Q^{\mathrm{lin}}_{\mathcal S}(\theta;\epsilon)
    =\frac{1}{m}\sum_{i\in\mathcal S}
    \left[
        D_{\Phi_G}(\theta,\theta_0)
        +\epsilon m z_i^\top(\theta-\theta_0)
    \right].
    \label{eq:distributed-group-objective}
\end{equation}
Each summand has a unique minimizer,
\begin{equation}
\begin{aligned}
    \widetilde\theta_i^{(m)}(\epsilon)
    &\coloneq\argmin_{\vartheta\in\mathbb R^d}
    \left\{
        D_{\Phi_G}(\vartheta,\theta_0)
        +\epsilon m z_i^\top(\vartheta-\theta_0)
    \right\}\\
    &=\theta_0-\epsilon mG^{-1}z_i.
\end{aligned}
    \label{eq:individual-response-centers}
\end{equation}
We call these minimizers \emph{member-wise response points} and collect them in the response cloud
$\mathcal P_{\mathcal S}(\epsilon)\coloneq\{\widetilde\theta_i^{(m)}(\epsilon):i\in\mathcal S\}$.
Relative to the singleton estimate in~\cref{eq:additive-group-response}, each point scales the estimated parameter displacement by $m$.

A Bregman centroid minimizes the average divergence from a collection of points to one representative~\citep{banerjee2005clustering}.
The next theorem identifies the entire linearized group objective with such a centroid objective.

\begin{theorem}[Bregman centroid representation of the linearized group PBRF]
\label{thm:group-bregman-clustering}
Let $G\succ0$, let $\Phi_G$ and $Q^{\mathrm{lin}}_{\mathcal S}$ be as in~\cref{eq:pbrf-lin}, and let $\mathcal S\subseteq[N]$ have size $m$ with response points defined by~\cref{eq:individual-response-centers}.
For every $\theta\in\mathbb R^d$,
\begin{align}
    Q^{\mathrm{lin}}_{\mathcal S}(\theta;\epsilon)
    =\frac{1}{m}\sum_{i\in\mathcal S}
    D_{\Phi_G}\bigl(\widetilde\theta_i^{(m)}(\epsilon),\theta\bigr)
    +K_{\mathcal S}(\epsilon),
    \label{eq:group-clustering-objective}
\end{align}
where $K_{\mathcal S}(\epsilon)\coloneq-\frac{\epsilon^2m}{2}\sum_{i\in\mathcal S}z_i^\top G^{-1}z_i$ is independent of $\theta$.
The unique minimizer is
\begin{equation}
    \widehat\theta_{\mathcal S}(\epsilon)
    \;=\;\frac{1}{m}\sum_{i\in\mathcal S}\widetilde\theta_i^{(m)}(\epsilon)
    \;=\;\theta_0-\epsilon G^{-1}c_{\mathcal S}.
    \label{eq:response-point-centroid-preview}
\end{equation}
\end{theorem}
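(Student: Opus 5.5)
The proof is a direct algebraic expansion of the quadratic Bregman divergence. It exploits the fact that $D_{\Phi_G}(a,b)=\tfrac12(a-b)^\top G(a-b)$ is symmetric in its arguments for the quadratic generator $\Phi_G$. Write $\delta\coloneq\theta-\theta_0$ and $u_i\coloneq\widetilde\theta_i^{(m)}(\epsilon)-\theta_0=-\epsilon mG^{-1}z_i$, using~\cref{eq:individual-response-centers}. Then each summand on the right-hand side of~\cref{eq:group-clustering-objective} becomes
\[
D_{\Phi_G}\bigl(\widetilde\theta_i^{(m)}(\epsilon),\theta\bigr)=\tfrac12(u_i-\delta)^\top G(u_i-\delta)=\tfrac12\delta^\top G\delta-u_i^\top G\delta+\tfrac12 u_i^\top G u_i .
\]

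The next step evaluates each of the three terms after averaging over $i\in\mathcal S$. The first term is $\tfrac12\delta^\top G\delta=D_{\Phi_G}(\theta,\theta_0)$, which does not depend on $i$, so it survives the average unchanged. For the cross term, $G$ is symmetric, so $-u_i^\top G\delta=\epsilon m z_i^\top\delta$. Averaging gives $\frac1m\sum_i \epsilon m z_i^\top\delta=\epsilon c_{\mathcal S}^\top(\theta-\theta_0)$, which is exactly the linear term in~\cref{eq:pbrf-lin}; equivalently, this reproduces the summands of~\cref{eq:distributed-group-objective}. The last term is $\tfrac12 u_i^\top Gu_i=\tfrac{\epsilon^2m^2}{2}z_i^\top G^{-1}z_i$, whose average is $\tfrac{\epsilon^2 m}{2}\sum_i z_i^\top G^{-1}z_i=-K_{\mathcal S}(\epsilon)$. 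Moving it to the other side gives~\cref{eq:group-clustering-objective} with the stated constant, which is independent of $\theta$.

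For the minimizer, $K_{\mathcal S}(\epsilon)$ does not involve $\theta$, so minimizing $Q^{\mathrm{lin}}_{\mathcal S}$ is the same as minimizing $\frac1m\sum_i D_{\Phi_G}(\widetilde\theta_i^{(m)},\theta)$. Its gradient in $\theta$ is $G\bigl(\theta-\frac1m\sum_i\widetilde\theta_i^{(m)}\bigr)$, and because $G\succ0$ the objective is strictly convex. Hence the unique minimizer is the arithmetic mean of the response points, consistent with the Banerjee et al.\ centroid result for minimizing over the second argument. Substituting the explicit points gives $\theta_0-\epsilon G^{-1}\sum_i z_i=\theta_0-\epsilon G^{-1}c_{\mathcal S}$, which matches the minimizer already found after~\cref{eq:pbrf-lin}.

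No step is a genuine obstacle. The only places needing care are the argument order and the sign of the cross term. The divergence is written as $D(\widetilde\theta_i,\theta)$, with the centroid in the second slot, which is why the mean is the minimizer. Symmetry of the quadratic divergence ensures this ordering does not change the expansion.
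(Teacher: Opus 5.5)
Your proposal is correct and follows essentially the same route as the paper: expand the quadratic divergence around $\theta_0$, average over $\mathcal S$ to recover $Q^{\mathrm{lin}}_{\mathcal S}$ plus the constant $-K_{\mathcal S}(\epsilon)$, and identify the mean of the response points as the unique minimizer. The differences are cosmetic. The paper expands $D_{\Phi_G}(\theta,\widetilde\theta_i^{(m)}(\epsilon))$ and then invokes symmetry, and it gets the minimizer from the centroid identity in \cref{lem:fixed-group-centroid-identity} rather than from your gradient and strict-convexity argument.
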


The proof is given in~\cref{app:proof-group-clustering}.
\Cref{app:center-shift} gives a dual-coordinate extension to general Legendre generators and states where the quadratic specialization is needed.
The result is an objective identity for every $\theta$, not merely an equality of minimizers.
It shows that standard GI fits a single representative to the response cloud in the geometry induced by $G$.
The centroid objective also provides a way to quantify the variation hidden by this representative.

\subsection{Bregman Information of the Response Cloud}
\label{sec:method-information}

Groups with the same centroid can have different member-wise responses.
For a scalar target, for example, a small aggregate can arise from
uniformly weak contributions or from large opposing contributions
that cancel.
\emph{Bregman information} (BI) measures this omitted dispersion
as the minimum average divergence from the response points to
a single representative~\citep{banerjee2005clustering},
corresponding to the minimum value of the first term on the
right-hand side of~\cref{eq:group-clustering-objective}.

We first measure dispersion in parameter space, without
reference to a particular target.
The minimum is attained at the group centroid
$\widehat\theta_{\mathcal S}(\epsilon)$.
Because each member-wise displacement
in~\cref{eq:individual-response-centers} is scaled by
$\epsilon m$, the resulting quadratic divergence contains
a common factor $\epsilon^2m^2$.
We separate this explicit scaling from the within-group
dispersion by writing
\[
    \frac{1}{m}\sum_{i\in\mathcal S}
    D_{\Phi_G}\bigl(
        \widetilde\theta_i^{(m)}(\epsilon),
        \widehat\theta_{\mathcal S}(\epsilon)
    \bigr)
    =\epsilon^2m^2\mathcal B_G(\mathcal S),
\]
where the \emph{normalized full Bregman information} is
\begin{equation}
    \mathcal B_G(\mathcal S)
    \coloneq\frac{1}{2m}\sum_{i\in\mathcal S}
    (z_i-\bar z_{\mathcal S})^\top
    G^{-1}(z_i-\bar z_{\mathcal S}),
    \qquad
    \bar z_{\mathcal S}
    \coloneq\frac{1}{m}\sum_{i\in\mathcal S}z_i.
    \label{eq:full-bregman-information}
\end{equation}

To measure how each response differs in their effects
on a scalar target $f$, we instead consider their
first-order target changes:
\begin{equation}
    f\bigl(\widetilde\theta_i^{(m)}(\epsilon)\bigr)-f(\theta_0)
    \;\approx \; \nabla_\theta f(\theta_0)^\top
    \bigl(\widetilde\theta_i^{(m)}(\epsilon)-\theta_0\bigr)
    \;=\;-\epsilon m q^\top z_i,
    \label{eq:target-projection}
\end{equation}
where $q=G^{-1}\nabla_\theta f(\theta_0)$ is the target
direction defined in~\cref{eq:standard-group-influence}.
The mean of these linearized target changes recovers
standard GI, while their dispersion describes variation
hidden by this mean.
We measure dispersion among these linearized target changes
using half the squared difference,
$D_\psi(a,b)=\tfrac12(a-b)^2$,
which is the Bregman divergence generated by $\psi(a)=a^2/2$.
The optimal single representative under this divergence is
the mean, and the minimum average divergence is therefore
half the variance of the linearized target changes.
Removing the common factor $\epsilon^2m^2$ gives the
\emph{normalized projected Bregman information}:
\begin{equation}
    \mathcal B_{\mathrm{proj}}(\mathcal S)
    \coloneq\frac{1}{2m}\sum_{i\in\mathcal S}
    \bigl[q^\top(z_i-\bar z_{\mathcal S})\bigr]^2
    =\frac12\operatorname{Var}_{i\sim\mathrm{Unif}(\mathcal S)}
    (q^\top z_i).
    \label{eq:projected-bregman-information}
\end{equation}
Projected BI therefore measures variation among the
unscaled influence scores for the chosen target, rather
than variation across the full parameter-response space.
The normalization identities are proved
in~\cref{app:information-identities}.

Both quantities characterize dispersion omitted by the
centroid; neither by itself bounds the error relative to
the nonlinear PBRF.
We examine the association between projected BI and
approximation error in~\cref{sec:evaluation-bi}.

\subsection{From One Centroid to Multiple Components}
\label{sec:method-components}

To describe within-group structure beyond a single BI value,
we partition $\mathcal S$ into $k$ nonempty, disjoint
components $\mathcal C_1,\ldots,\mathcal C_k$, for a chosen
$k\in\{1,\ldots,m\}$.
Each $\mathcal C_r$ indexes a subset of the response cloud,
with representative $\mu_r\in\mathbb R^d$.
Bregman clustering jointly selects the partition and representatives by minimizing the following objective~\citep{banerjee2005clustering}:
\begin{equation}
    \min_{\substack{
        \mathcal S=\bigsqcup_{r=1}^{k}\mathcal C_r,\;
        \mathcal C_r\ne\varnothing\\
        \mu_1,\ldots,\mu_k\in\mathbb R^d
    }}
    \frac{1}{m}\sum_{r=1}^{k}\sum_{i\in\mathcal C_r}
    D_{\Phi_G}\bigl(
        \widetilde\theta_i^{(m)}(\epsilon),\mu_r
    \bigr).
    \label{eq:bregman-clustering}
\end{equation}
For a fixed partition, the optimal representative $\mu_r$
is the mean of the response points in $\mathcal C_r$.
Its size-weighted displacement defines the
\emph{component contribution}
\begin{equation}
    \Delta\theta_r
    \coloneq\frac{|\mathcal C_r|}{m}(\mu_r-\theta_0)
    =-\epsilon G^{-1}\sum_{i\in\mathcal C_r}z_i.
    \label{eq:component-response-direction}
\end{equation}
Using the same $G$ and $\epsilon$ for all components,
any partition satisfies
\begin{equation}
    \sum_{r=1}^{k}\Delta\theta_r
    =\widehat\theta_{\mathcal S}(\epsilon)-\theta_0.
    \label{eq:component-reconstruction}
\end{equation}
The target contributions
$\nabla_\theta f(\theta_0)^\top\Delta\theta_r$
therefore also sum to standard GI.
The decomposition therefore reveals structure without changing the additive estimate.

\section{Improving Group Influence with Component Structure}
\label{sec:method-cluster-optimization}

The component decomposition in~\cref{eq:component-reconstruction}
expresses the standard parameter-response estimate as a sum
of component contributions with unit coefficients.
We keep these component directions fixed and optimize their
coefficients using a richer approximation of the
group-reweighting term.
The resulting parameter change remains in the span of these
directions but is no longer restricted to their original sum,
as illustrated in~\cref{fig:illustration}(c).

\subsection{Quadratic Correction in Component Space}
\label{sec:method-joint-e2}

We parameterize the group-induced change by
\begin{equation}
    \theta(\alpha)=\theta_0+
    \sum_{r=1}^{k}\alpha_r\Delta\theta_r,
    \qquad \alpha\in\mathbb R^k.
    \label{eq:cluster-optimization-space}
\end{equation}
Here $\theta(\mathbf1)=\widehat\theta_{\mathcal S}(\epsilon)$
is already the unique minimizer of~\cref{eq:pbrf-lin}.
Obtaining a different parameter response therefore requires
modifying the objective used to optimize $\alpha$.

In~\cref{eq:pbrf-lin}, the group-reweighting term
$\epsilon c_{\mathcal S}^\top(\theta-\theta_0)$
is linear in the parameter displacement and therefore
omits the curvature of the reweighted training losses.
We incorporate this curvature through a GGN-based
quadratic approximation, while leaving the common penalty
$D_{\Phi_G}(\theta,\theta_0)$ unchanged:
\begin{equation}
    Q^{\mathrm{E2}}_{\mathcal S}(\theta;\epsilon)
    \coloneq Q^{\mathrm{lin}}_{\mathcal S}(\theta;\epsilon)
    +\frac{\epsilon}{2}\sum_{i\in\mathcal S}
    (\theta-\theta_0)^\top J_i^\top H_iJ_i
    (\theta-\theta_0).
    \label{eq:cluster-e2-objective}
\end{equation}
Here $J_i$ and $H_i$ are evaluated at $\theta_0$, as in~\cref{sec:prelim-if}, and removal uses $\epsilon=-1/N$.
We refer to this correction as E2. %
After substituting~\cref{eq:cluster-optimization-space}, the added term includes curvature along each component direction and cross terms between directions, evaluated over the entire reweighted group.

In \emph{Joint E2}, we jointly optimize the component
coefficients by minimizing $Q^{\mathrm{E2}}_{\mathcal S}$:
\begin{equation}
    \alpha^\star\in\argmin_{\alpha\in\mathbb R^k}
    Q^{\mathrm{E2}}_{\mathcal S}\bigl(\theta(\alpha);\epsilon\bigr).
    \label{eq:joint-e2-coefficients}
\end{equation}
The resulting target-change prediction is
$\nabla_\theta f(\theta_0)^\top(\theta(\alpha^\star)-\theta_0)$.
For $k=1$, coefficient fitting can only rescale the original group direction; multiple components allow their relative contributions to change.

\subsection{Efficient Approximations}
\label{sec:method-efficient}
\label{sec:method-intra-e2}
\label{sec:method-clustering-geometry}

Joint E2 requires both component construction and evaluation of the GGN curvature of the reweighted group.
We consider two approximations that reduce these costs separately. %

\paragraph{Within-component curvature approximation.}
For each reweighted example, Joint E2 evaluates the
GGN-based quadratic correction using the combined displacement
$\theta(\alpha)-\theta_0=\sum_{r=1}^{k}\alpha_r\Delta\theta_r$.
For an example in $\mathcal C_r$, \emph{Intra E2} instead
uses only $\alpha_r\Delta\theta_r$ in this additional
quadratic term, while leaving
$Q^{\mathrm{lin}}_{\mathcal S}(\theta(\alpha);\epsilon)$
unchanged:
\begin{equation}
    Q^{\mathrm{intra}}_{\mathcal S}(\alpha;\epsilon)
    \coloneq Q^{\mathrm{lin}}_{\mathcal S}\bigl(\theta(\alpha);\epsilon\bigr)
    +\frac{\epsilon}{2}\sum_{r=1}^{k}\alpha_r^2
    \sum_{i\in\mathcal C_r}\Delta\theta_r^\top J_i^\top H_iJ_i\Delta\theta_r.
    \label{eq:intra-e2-objective}
\end{equation}
This restriction omits cross-direction terms and evaluates each direction's curvature only on its own component's examples.
The unchanged common quadratic term retains cross-component terms, so we still optimize all coefficients jointly.

\paragraph{Euclidean component construction.}
To reduce clustering cost in \Cref{eq:bregman-clustering}, we form components by clustering
the raw gradients $z_i$ using Euclidean distances,
without curvature weighting.
This simplification applies only to clustering; we retain
the original $G$ when computing component directions
and optimizing their coefficients.

\section{Experiments}
\label{sec:experiments}
\suppressfloats[t]

We investigate whether projected BI is associated with the approximation error of standard GI and whether the proposed component-based correction improves estimation accuracy.
We first describe the common experimental setup in~\cref{sec:evaluation-setup} and then examine the BI-error relationship in~\cref{sec:evaluation-bi}.
Finally, we analyze component-wise influence patterns and evaluate the proposed corrections in~\cref{sec:evaluation-components}.

\subsection{Experimental Setup}
\label{sec:evaluation-setup}

\paragraph{Models and removal groups.}
We use logistic regression on Dogfish~\citep{koh2017understanding, koh2019accuracy}, a multilayer perceptron (MLP) on MNIST~\citep{bae2022if}, and ResNet9 on CIFAR-10~\citep{park2023trak}.
For each dataset, we fix the original model $\theta_0$ and select $100$ held-out test examples as \emph{queries}, as defined in~\cref{sec:prelim-if}.
For each removal ratio, we sample $100$ training groups uniformly without replacement within each group, allowing overlap between groups.
We predict how each group's removal changes the loss on each individual query.
All methods use the same removal groups and queries.

\paragraph{Targets and reference responses.}

Query $(x'_j,y'_j)$ defines the single-example loss $f_j(\theta)\coloneq\ell(h_\theta(x'_j),y'_j)$.
For a fixed query $j$, we write $f=f_j$ and suppress the query index.
With removal weight $\epsilon=-1/N$, we numerically optimize the PBRF objective in~\cref{eq:pbrf} once per group and reuse the resulting parameters across methods and queries.
The reference loss change for the query under evaluation is
\[
    A^{\mathrm{PBRF}}_{\mathcal S}(\epsilon)
    \coloneq
    f\bigl(\theta^{\mathrm{PBRF}}_{\mathcal S}(\epsilon)\bigr)
    -f(\theta_0).
\]
We use the full-training-set damped GGN matrix $G$, with the same ridge coefficient in $G$ and the PBRF proximal term.

\paragraph{Evaluation metrics.}
For each method and query, we compute Spearman correlation and raw $R^2$ from $100$ pairs of predicted and reference loss changes, one per removal group.
We repeat this evaluation for the $100$ queries and report the median of each metric over the common retained subset.
High rank correlation can coexist with large prediction errors~\citep{koh2019accuracy}.
We therefore use raw $R^2$ alongside Spearman to assess how closely predicted influence values match the reference, accounting for scale differences across queries.
Evaluation details are provided in~\cref{app:experimental-details}.

\begin{wrapfigure}{r}{0.5\textwidth}  %
    \centering
    \includegraphics[width=\linewidth]{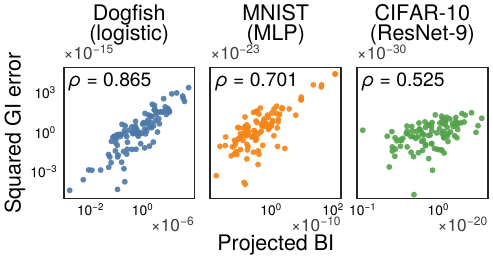}
    \caption{Bregman information tracks group influence error.
    Each point represents a random 1\% removal group (100 groups per model).
    We plot projected Bregman information against the squared error of GI relative to the actual PBRF change.
    For each model, we show the query whose Spearman correlation $\rho$ is closest to the median over 100 test queries.}
    \label{fig:projected-bi-error-scatter}
\end{wrapfigure}

\subsection{Bregman Information and Approximation Error}
\label{sec:evaluation-bi}

BI characterizes dispersion omitted by the centroid representation, but its association with nonlinear approximation error is an empirical question.
Using the $1\%$ groups for all three models, we compute, for each fixed query, the Spearman correlation across removal groups between projected BI and the squared error of standard GI,
\begin{equation}
    \mathcal E_{\mathcal S}(f,\epsilon)
    \coloneq\bigl[A^{\mathrm{PBRF}}_{\mathcal S}(\epsilon)
                  -\widehat A_{\mathcal S}(\epsilon)\bigr]^2,
    \label{eq:bi-squared-pbrf-error}
\end{equation}
where $\widehat A_{\mathcal S}(\epsilon)$ is defined in~\cref{eq:standard-group-influence}.
We plot the summed quantity $m\mathcal B_{\mathrm{proj}}(\mathcal S)$ without the removal factor $N^{-2}$, following the convention detailed in~\cref{app:diagnostic-details}.
Since the group size is fixed within each model, this rescaling does not affect the Spearman correlation.
For each model, we display the query whose correlation is closest to the median across the $100$ fixed queries.
\Cref{fig:projected-bi-error-scatter} shows a positive rank association between projected BI and squared GI error in all three settings.
This result empirically links the dispersion omitted by the centroid representation to GI approximation error.

\Cref{tab:geometry-ablation} compares projected BI with full BI and Euclidean gradient variance.
Projected BI exhibits the strongest correlation with squared GI error, whereas full BI and Euclidean gradient variance perform similarly.
This gap shows that target projection provides additional information for the target-specific approximation error considered here.
The comparison concerns dispersion measures; we evaluate the effect of clustering geometry on correction accuracy separately in~\cref{sec:evaluation-components}.

\subsection{Component Decomposition and Influence Correction}
\label{sec:evaluation-components}

We first examine the component structure hidden by GI, then evaluate whether reweighting these components improves prediction and how the efficient approximations affect accuracy and cost.

\paragraph{Component-wise influence patterns.}
We partition each of the $100$ random MNIST $10\%$ removal groups into $k=2$ Bregman components.
For each of the $100$ fixed queries, we compare the component contributions with their sum, which recovers GI by~\cref{eq:component-reconstruction}.
Component contributions are totals over the training examples in each component, rather than per-example averages.
To examine class-wise patterns, we average each component's influence over test examples within each class in the $10{,}000$-example MNIST test set, forming a ten-dimensional profile.

\begin{wrapfigure}{r}{0.5\textwidth}  %
  \centering
  \includegraphics[width=\linewidth]{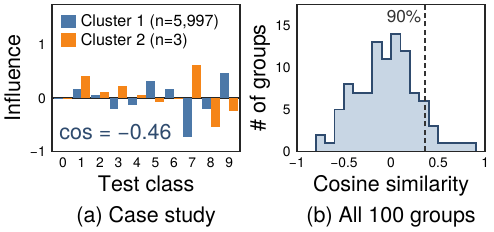}
  \caption{The two clusters tend to exhibit different class-wise influence patterns. For an MNIST MLP, we split each of 100 random 10\% removal groups
into $k=2$ Bregman clusters.
Each cluster's class-wise influence vector contains its contribution to GI
averaged over test points within each class.
\textbf{(a)}~Case study: unit-normalized class-wise influence vectors of the two clusters.
\textbf{(b)}~Cosine similarities across all 100 groups
(median $-0.03$, IQR $[-0.25, 0.17]$).
The dashed line marks the 90th percentile ($0.36$).}
\vskip -2em
  \label{fig:cluster-seperation}
\end{wrapfigure}

\cref{fig:cluster-seperation}(a) shows that the two components capture distinct class-wise influence patterns within a single removal group, with opposite-signed contributions for several classes and a cosine similarity of $-0.46$ between their unit-normalized profiles.
\cref{fig:cluster-seperation}(b) shows that the component profiles are generally not aligned across the $100$ removal groups, with a median cosine similarity of $-0.03$ and a 90th percentile of $0.36$.
The ablation in~\cref{app:omission-results} further shows that retaining both components yields higher median raw $R^2$ 
against the full-group PBRF reference than retaining either alone.

\paragraph{Accuracy of corrected estimates.}
\Cref{fig:actual-vs-pred-models} compares GI, single-component E2 ($k=1$), and Euclidean Joint and Intra E2 at $k=2$ across the three models.
Single-component E2 optimizes one coefficient along the original GI direction, whereas the two-component methods reweight the component contributions separately.
Although GI already ranks groups accurately, consistent with~\citet{koh2019accuracy}, single-component E2 improves raw $R^2$ over GI on all three models.
Joint E2 further raises median raw $R^2$ from $0.891$ to $0.945$ on Dogfish and from $0.965$ to $0.975$ on ResNet9; Intra E2 reaches $0.945$ and $0.974$, respectively.
On MNIST, where single-component E2 already achieves $0.993$, Joint and Intra E2 reach $0.995$ and $0.994$.
The paired significance tests in~\cref{app:diagnostic-details} show that both two-component variants improve query-wise Spearman correlation and raw $R^2$ over single-component E2 in all three settings, with $p_{\mathrm{adj}}<0.001$ throughout.
Thus, separately reweighting the component contributions improves on rescaling the aggregate direction alone.

\begin{figure}[t]  %
  \centering
  \includegraphics[width=0.95\linewidth]{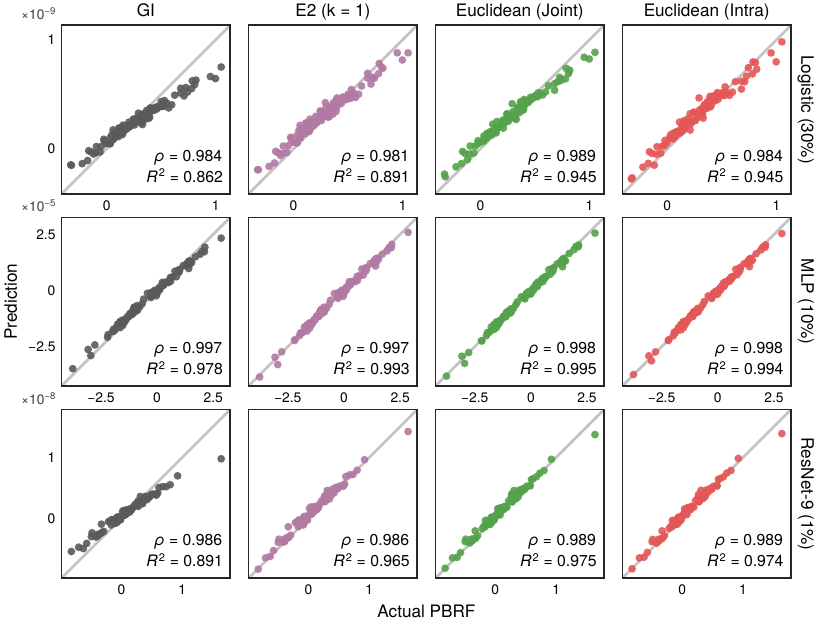}  
    \caption{
    Actual PBRF vs.\ predicted test-loss changes.
    From top to bottom, the rows show Dogfish logistic regression ($30\%$ removal), MNIST MLP ($10\%$), and CIFAR-10 ResNet9 ($1\%$).
    From left to right, the columns compare GI, single-component E2 ($k=1$), Euclidean Joint E2 ($k=2$), and Euclidean Intra E2 ($k=2$).
    Each panel shows one query across $100$ removal groups, while annotations report median query-wise Spearman $\rho$ and raw $R^2$ over the retained queries.
    The gray line marks $y=x$.
    Across all three models, the improvements of both $k=2$ variants over single-component E2 are significant at the $0.001$ level for both metrics after Holm correction.
    Statistical details are provided in~\cref{app:statistical-testing}.
    }
  \label{fig:actual-vs-pred-models}
\end{figure}

\begin{table}[t]
  \centering
    \caption{
    Ablation of clustering geometry and edit-curvature scope at $k=2$.
    For each query, Spearman $\rho$ and raw $R^2$ are computed across $100$ random removal groups.
    Entries report the median over the common retained queries, with the interquartile range in brackets.
    }
  \label{tab:clustering-curvature}
  \vspace{0.5em}
  \small
  \setlength{\tabcolsep}{4pt}
  \begin{tabular}{llcccc}
  \toprule
   & & \multicolumn{2}{c}{Dogfish (logistic, 30\%)} & \multicolumn{2}{c}{MNIST (MLP, 10\%)} \\
  \cmidrule(lr){3-4}\cmidrule(lr){5-6}
   & Clustering & Spearman $\rho\uparrow$ & $R^2\uparrow$ & Spearman $\rho\uparrow$ & $R^2\uparrow$ \\
  \midrule
  \multirow{2}{*}{\rotatebox[origin=c]{90}{Joint}}
    & Bregman
    & $0.989_{[0.982,\,0.991]}$ & $0.945_{[0.926,\,0.957]}$
    & $0.998_{[0.997,\,0.998]}$ & $0.995_{[0.993,\,0.996]}$ \\
    & Euclidean
    & $0.989_{[0.983,\,0.991]}$ & $0.945_{[0.926,\,0.958]}$
    & $0.998_{[0.997,\,0.998]}$ & $0.995_{[0.993,\,0.996]}$ \\
  \midrule
  \multirow{2}{*}{\rotatebox[origin=c]{90}{Intra}}
    & Bregman
    & $0.984_{[0.977,\,0.989]}$ & $0.945_{[0.919,\,0.957]}$
    & $0.998_{[0.997,\,0.998]}$ & $0.993_{[0.991,\,0.995]}$ \\
    & Euclidean
    & $0.984_{[0.976,\,0.989]}$ & $0.945_{[0.919,\,0.957]}$
    & $0.998_{[0.997,\,0.998]}$ & $0.994_{[0.992,\,0.995]}$ \\
  \bottomrule
  \end{tabular}
\end{table}

\newpage

\begin{wraptable}{r}{0.5\linewidth}
\centering
\caption{Measured stage-wise cost reductions on MNIST at $k=2$.
Baseline denotes Bregman clustering for partition construction and Joint E2 for
edit curvature; Approx. denotes Euclidean clustering and Intra E2, respectively.
Reduction is the relative decrease from the baseline stage time.
Big-O analysis and timing protocols are provided in~\cref{app:e2-complexity}.}
\vspace{0.5em}
\label{tab:efficient-measured-cost}
\resizebox{\linewidth}{!}{%
\begin{tabular}{@{}llrrr@{}}
\toprule
Stage & Removal ratio & Baseline & Approx. & Reduction \\
\midrule
Partition & $1\%$  & $59.4$s  & $0.277$s & $99.5\%$ \\
Edit curvature & $10\%$ & $0.193$s & $0.103$s & $46.6\%$ \\
\bottomrule
\end{tabular}%
}
\end{wraptable}

\paragraph{Efficient approximations.}
\Cref{tab:clustering-curvature} separates the effects of clustering geometry and edit-curvature scope by comparing Bregman and Euclidean clustering under Joint and Intra E2 at $k=2$ on Dogfish and MNIST.
Changing the clustering geometry has little effect under either curvature scope, and Intra E2 remains close to Joint E2 for either geometry.
The empirical evidence and attribution precedents supporting the Euclidean approximation are discussed in~\cref{app:geometry-ablation}.

\Cref{tab:efficient-measured-cost} reports the corresponding measured stage-wise cost reductions on MNIST: Euclidean clustering substantially reduces partition-construction time, while Intra E2 reduces edit-curvature time with
fixed Euclidean partitions.
These results identify computational savings in the replaced stages, alongside similar prediction accuracy in the evaluated $k=2$ settings.
Complexity analysis and timing details are provided in~\cref{app:e2-complexity}.

\newpage
\section{Conclusion}
\label{sec:conclusion}

We show that standard group influence compresses member-wise responses into a single representative, hiding variation within the group.
The Bregman view makes this compression explicit: BI quantifies the omitted variation, while multiple components preserve the original estimate and reveal how different parts of the group reinforce or oppose one another.
Experiments show that target-projected BI is associated with approximation error relative to the nonlinear PBRF, and that adjusting the component weights improves prediction of target changes.
The Bregman characterization is exact for the linearized response, whereas the BI--error association and estimation gains are empirical.
Together, these results show that structure hidden by additive aggregation can be exposed and used to refine group influence estimates.

\section*{AI Use Statement}
We used generative AI tools to support research ideation and execution, including refining experimental designs, implementing and debugging experimental and analysis code, and interpreting experimental results.
We also used these tools to draft and revise sections of the manuscript and to improve the clarity and readability of the writing.
The authors made the final decisions regarding the methodology, interpretation of results, and presentation, and take responsibility for the final content of this work, including all AI-assisted text, code, analyses, and other artifacts.

\section*{Ethics Statement}
This work studies methods for interpreting and estimating the effect of training-data removal. The experiments use standard benchmark datasets and fixed research models, and do not involve human-subject experiments, newly collected personal data, or deployment in real-world decision systems. The proposed method can support beneficial uses such as data auditing, debugging, and understanding when additive group influence hides within-group structure. However, influence estimates are approximations and should not be treated as definitive evidence about individual examples, especially in sensitive or consequential settings. Practitioners should use appropriate privacy safeguards, access control, and domain expert oversight when applying such analyses to sensitive data.

\section*{Reproducibility Statement}
We provide derivations and proofs in \cref{app:additional-theory} and detailed experimental protocols in \cref{app:experimental-details}, including dataset and model settings, removal-group construction, query selection, curvature construction, solver tolerances, clustering procedures, E2 coefficient solves, PBRF reference computation, timing protocols, and statistical testing. All main comparisons use fixed checkpoints, shared removal groups and queries, common curvature operators, and the same actual PBRF reference points within each setting.

\bibliography{iclr2027_conference}
\bibliographystyle{iclr2027_conference}

\appendix
\clearpage
\section{Related work}
\label[appendix]{app:related-work}

\paragraph{Training-data attribution.}
Training-data attribution explains model outputs in terms of the examples used for training; see \citet{hammoudeh2024training} for a survey.
Representative instance-based approaches include representer points, gradient-similarity methods, TracIn, and influence functions~\citep{koh2017understanding,yeh2018representer,charpiat2019input,pruthi2020estimating,hanawa2021evaluation}.
Counterfactual attribution and valuation approaches include Data Shapley, Datamodels, and TRAK~\citep{ghorbani2019datashapley,jia2019efficient,ilyas2022datamodels,park2023trak}.

\paragraph{The PBRF as the reference response.}
Prior work has documented substantial gaps between neural-network influence estimates and literal retraining, as well as sensitivity to practical approximation choices~\citep{basu2020fragile, schioppa2023theoretical}.
\citet{bae2022if} show that practical influence estimates for neural networks approximate the proximal Bregman response function (PBRF) more closely than leave-one-out retraining.
The PBRF provides a nonlinear reference whose local approximation recovers the damped-GGN influence estimate.
Following \citet{bae2022if}, several studies have used the nonlinear PBRF response to evaluate influence approximations, including function-space approximations~\citep{pmlr-v202-dhawan23a}, language-model influence estimates~\citep{grosse2023studyinglargelanguagemodel}, inverse-curvature solvers~\citep{klochkov2024revisitinginversehessianvector}.
The PBRF has also been extended to graph edge edits~\citep{heo2025influence}.

\paragraph{Correcting additive group influence.}
The accuracy of influence, approximate leave-one/-\(k\)-out, and Newton-step estimators has also been studied under convexity and smoothness assumptions~\citep{giordano2019swiss,rad2018scalable, fisher2023influence, rubinstein2026accuracynewtonstepinfluence}.
\citet{koh2019accuracy} find that high rank correlations between additive group influence and retraining effects can coexist with large errors in the predicted magnitudes.
Their analysis uses a one-step Newton approximation that accounts for removal-induced curvature changes~\citep{pregibon1981logistic}.
\citet{basu2020second} derive second-order approximations of the parameter response under group reweighting, whereas \citet{heo2026interactionawareinfluencefunctionsgroup} retain the first-order parameter change and expand the target function to second order, yielding pairwise interaction terms.
Rescaled Influence Functions (RIF)~\citep{rubinstein2025rescaled} instead sum single-example Newton removal steps, preserving additivity of the estimated parameter change.
Recent theory gives finite-sample analyses of IF and the single Newton step (NS) for convex ERM and identifies regimes in which NS is substantially more accurate than IF~\citep{rubinstein2026accuracynewtonstepinfluence}.

Our decomposition initially preserves standard GI.
E2 then optimizes a GGN-based quadratic surrogate of the PBRF edit objective within the span of the component contributions, and can therefore be viewed as a reduced-space Newton-type correction.
The analysis of~\citet{rubinstein2026accuracynewtonstepinfluence} concerns a full-space Newton step for leave-group-out ERM, whereas E2 targets the PBRF and adjusts only the coefficients of component directions derived from the Bregman decomposition.
E2 also keeps the component directions and target linearization fixed, rather than expanding the parameter response as in~\citet{basu2020second} or the target as in~\citet{heo2026interactionawareinfluencefunctionsgroup}.
RIF computes single-example Newton corrections independently; E2 instead fits the component coefficients jointly for the entire removal group.

\paragraph{Bregman centroids and information.}
\citet{banerjee2005clustering} develop centroid-based clustering with Bregman divergences and characterize Bregman information as the minimum average divergence to a single representative.
We apply these established concepts to the scaled member-wise parameter responses induced by the linearized group PBRF.
Our objective identity connects the additive estimate to its centroid representation, the dispersion it omits, and a multi-component decomposition.
In the quadratic setting, target-projected BI is a scaled variance of individual influence scores; its association with nonlinear PBRF error is an empirical finding, not a consequence of the centroid identity.

\paragraph{Grouping and clustering for efficient attribution.}
Generalized Group Data Attribution~\citep{ley2024generalizedgroupdataattribution} attributes model properties to groups rather than individual training examples, trading attribution granularity for computational efficiency.
Its framework includes grouped versions of influence functions, TracIn~\citep{pruthi2020estimating}, and TRAK~\citep{park2023trak}.
Clustered Influence Functions~\citep{bado2026clustered} cluster training-gradient sketches, cache inverse-curvature responses for full-space cluster means, and answer subset queries by membership-weighted recombination.
Their error analysis separates clustering scatter from solver error when approximating a fixed linearized influence estimate.
We instead partition each specified removal group, obtain component contributions that recover its original GI estimate, and then optimize their combination to improve agreement with the nonlinear PBRF.
Our Euclidean and Intra variants simplify partition construction and edit-curvature computation within this group-specific correction, rather than constructing a global response cache.

\paragraph{Influence representations for structure discovery.}
InfEmbed~\citep{wang2023error} applies $k$-means to influence embeddings of test examples to discover underperforming slices with similar training-data explanations.
DeMix~\citep{deng2026demix} represents each training example by its influences across validation predictions and uses multi-label classification to identify erroneous samples and their error types.
Our construction instead starts from the influence objective itself: we prove that the linearized group PBRF objective equals a Bregman centroid objective, up to a parameter-independent constant (\cref{thm:group-bregman-clustering}).
This identity grounds the response points and their clustering geometry in the objective underlying standard GI.
The multi-component extension refines this representation in the same geometry, with size-weighted contributions that exactly recover the additive estimate (\cref{sec:method-components}).
These components support both interpretation of the aggregate and its subsequent correction.

\newpage

\section{Additional Theory and Proofs}
\label[appendix]{app:additional-theory}

We first give the local derivations used in~\cref{sec:prelim}, then prove the centroid and dispersion identities in~\cref{sec:method} and the contribution decomposition in~\cref{sec:method-components}.
The final subsection records the additional conditions needed for a nonquadratic Bregman generator.

\subsection{Classical group influence by implicit differentiation}
\label[appendix]{app:classical-group-influence}

We give the group version of the classical derivation in~\citet{koh2017understanding}, with the fixed-target additivity discussed by~\citet{koh2019accuracy}.
Assume that \(\mathcal J\) and the losses \(L_i\), \(i\in \mathcal S\), are twice continuously differentiable on a neighborhood of \(\theta_0\), with
\[
    \nabla_\theta\mathcal J(\theta_0)=0,
    \qquad H=\nabla_\theta^2\mathcal J(\theta_0)\succ0.
\]
These are local assumptions at an interior minimizer, not properties required of every trained original model.
Define the stationarity map
\[
    F_{\mathcal S}(\theta,\epsilon)
    :=\nabla_\theta\mathcal J(\theta)
      +\epsilon\sum_{i\in \mathcal S}\nabla_\theta L_i(\theta).
\]
Since \(F_{\mathcal S}(\theta_0,0)=0\) and \(D_\theta F_{\mathcal S}(\theta_0,0)=H\), the implicit-function theorem gives a unique continuously differentiable stationary branch \(\theta_{\mathcal S}^{\mathrm{loc}}(\epsilon)\) near \((\theta_0,0)\).
Continuity of the Hessian makes this a branch of strict local minimizers for sufficiently small \(|\epsilon|\).

Applying the chain rule to \(F_{\mathcal S}(\theta_{\mathcal S}^{\mathrm{loc}}(\epsilon),\epsilon)=0\) gives
\begin{equation}
    D_\theta F_{\mathcal S}(\theta_{\mathcal S}^{\mathrm{loc}}(\epsilon),\epsilon)
    \frac{d\theta_{\mathcal S}^{\mathrm{loc}}(\epsilon)}{d\epsilon}
    +\partial_\epsilon F_{\mathcal S}(\theta_{\mathcal S}^{\mathrm{loc}}(\epsilon),\epsilon)=0.
    \label{eq:appendix-classical-stationarity-chain-rule}
\end{equation}
At \(\epsilon=0\), the two derivatives of \(F_{\mathcal S}\) are \(H\) and \(c_{\mathcal S}=\sum_{i\in \mathcal S}z_i\), respectively, so
\[
    \left.\frac{d\theta_{\mathcal S}^{\mathrm{loc}}(\epsilon)}{d\epsilon}\right|_{\epsilon=0}
    =-H^{-1}c_{\mathcal S}.
\]
Consequently, for fixed \(\mathcal S\),
\begin{equation*}
    \theta_{\mathcal S}^{\mathrm{loc}}(\epsilon)
    =\theta_0-\epsilon H^{-1}c_{\mathcal S}+o(|\epsilon|),
    \qquad \epsilon\to0.
\end{equation*}
If the perturbed objectives are globally strongly convex and attain their minima for sufficiently small \(|\epsilon|\), this branch is their unique global minimizer \(\theta_{\mathcal S}^\star(\epsilon)\).
Otherwise, the conclusion is local.

For a scalar target \(f\) differentiable at \(\theta_0\), a second application of the chain rule yields
\[
    \left.\frac{d}{d\epsilon}f(\theta_{\mathcal S}^{\mathrm{loc}}(\epsilon))\right|_{\epsilon=0}
    =\nabla_\theta f(\theta_0)^\top
      \left.\frac{d\theta_{\mathcal S}^{\mathrm{loc}}(\epsilon)}{d\epsilon}\right|_{\epsilon=0}
    =-\nabla_\theta f(\theta_0)^\top H^{-1}c_{\mathcal S}.
\]
Thus
\begin{equation*}
    f(\theta_{\mathcal S}^{\mathrm{loc}}(\epsilon))-f(\theta_0)
    =-\epsilon\nabla_\theta f(\theta_0)^\top H^{-1}c_{\mathcal S}+o(|\epsilon|).
\end{equation*}
The first-order term is additive over disjoint groups for a fixed original model and target.
Setting \(\epsilon=-1/N\) in the parameter expansion recovers the removal estimate in~\cref{eq:classical-if}; the local derivation alone does not guarantee accuracy at that finite weight.

\subsection{PBRF influence by the chain rule and its quadratic objective}
\label[appendix]{app:pbrf-linearization}

We derive the first-order response of~\cref{eq:pbrf} by differentiating its stationarity condition, then identify the quadratic objective in~\cref{eq:pbrf-lin}.
This is the group version of the PBRF construction of~\citet{bae2022if}, with the negative removal weight used throughout this paper.
Assume that \(h_\theta(x_v)\) is twice continuously differentiable near \(\theta_0\), and that \(\ell(\cdot,y_v)\) is twice continuously differentiable near \(h_{\theta_0}(x_v)\) and convex on a convex domain containing the attainable outputs.
Write \(h_v(\theta)=h_\theta(x_v)\in\mathbb R^C\), where \(C\) is the output dimension, and set
\[
    h_v^s=h_v(\theta_0),\qquad
    J_v(\theta)=D_\theta h_v(\theta)\in\mathbb R^{C\times d},\qquad
    H_v(\theta)=\nabla_u^2\ell(h_v(\theta),y_v)\in\mathbb R^{C\times C}.
\]
Thus \(J_v=J_v(\theta_0)\) and \(H_v=H_v(\theta_0)\) agree with the notation in~\cref{sec:prelim-if}.

\paragraph{Source derivatives of the output-Bregman term.}
Define
\[
    B(\theta):=\frac1N\sum_{v=1}^N b_v(\theta),
    \qquad b_v(\theta):=D_{\ell(\cdot,y_v)}(h_v(\theta),h_v^s),
\]
and let \(e_v(\theta)=\nabla_u\ell(h_v(\theta),y_v)-\nabla_u\ell(h_v^s,y_v)\).
The chain and product rules give the exact identities
\begin{align*}
    \nabla_\theta b_v(\theta)
       &=J_v(\theta)^\top e_v(\theta),\nonumber\\
    \nabla_\theta^2 b_v(\theta)
       &=J_v(\theta)^\top H_v(\theta)J_v(\theta)
         +\sum_{a=1}^C e_{v,a}(\theta)\,
                \nabla_\theta^2 h_{v,a}(\theta).
\end{align*}
At \(\theta_0\), \(e_v(\theta_0)=0\), so the terms involving second derivatives of the model vanish exactly.
Therefore
\begin{equation*}
    B(\theta_0)=0,\qquad
    \nabla_\theta B(\theta_0)=0,\qquad
    \nabla_\theta^2B(\theta_0)=G_0:=\frac1N\sum_{v=1}^N J_v^\top H_vJ_v.
\end{equation*}
In particular, \(G_0\) is the exact Hessian of the output-Bregman term at \(\theta_0\), not an approximation obtained by discarding nonzero model-Hessian terms there.
It need not equal the Hessian of the original training loss.
Since \(H_v\succeq0\) and \(\lambda>0\), \(G=G_0+\lambda I_d\succ0\).

\paragraph{Differentiating the minimizing branch.}
At \(\epsilon=0\), nonnegativity of the output divergences and the positive proximal penalty make \(\theta_0\) the unique global PBRF minimizer.
This requires no condition \(\nabla_\theta\mathcal J(\theta_0)=0\).
Let
\[
    F_{\mathcal S}^{\mathrm{PBRF}}(\theta,\epsilon)
    :=\nabla_\theta B(\theta)+\lambda(\theta-\theta_0)
       +\epsilon\sum_{i\in \mathcal S}\nabla_\theta L_i(\theta).
\]
Its derivatives at the unperturbed point are
\[
    D_\theta F_{\mathcal S}^{\mathrm{PBRF}}(\theta_0,0)=G,
    \qquad \partial_\epsilon F_{\mathcal S}^{\mathrm{PBRF}}(\theta_0,0)=c_{\mathcal S}.
\]
The implicit-function theorem gives a continuously differentiable branch of strict local minimizers \(\theta_{\mathcal S}^{\mathrm{PBRF,loc}}(\epsilon)\) through \(\theta_0\).
Differentiating \(F_{\mathcal S}^{\mathrm{PBRF}}(\theta_{\mathcal S}^{\mathrm{PBRF,loc}}(\epsilon),\epsilon)=0\) by the chain rule, as in~\cref{eq:appendix-classical-stationarity-chain-rule}, gives
\begin{equation*}
    G\left.\frac{d\theta_{\mathcal S}^{\mathrm{PBRF,loc}}(\epsilon)}{d\epsilon}\right|_{\epsilon=0}
    +c_{\mathcal S}=0,
    \qquad
    \left.\frac{d\theta_{\mathcal S}^{\mathrm{PBRF,loc}}(\epsilon)}{d\epsilon}\right|_{\epsilon=0}
    =-G^{-1}c_{\mathcal S}.
\end{equation*}
The group-Hessian term \(\epsilon\sum_{i\in \mathcal S}\nabla_\theta^2L_i(\theta)\) vanishes at \(\epsilon=0\); no assumption that it is identically zero is needed.
For fixed \(\mathcal S\),
\begin{equation}
    \theta_{\mathcal S}^{\mathrm{PBRF,loc}}(\epsilon)
    =\theta_0-\epsilon G^{-1}c_{\mathcal S}+o(|\epsilon|),
    \qquad \epsilon\to0.
    \label{eq:appendix-pbrf-local-response}
\end{equation}
For any scalar target \(f\) differentiable at \(\theta_0\), the chain rule then gives
\begin{align*}
    \left.\frac{d}{d\epsilon}
      f(\theta_{\mathcal S}^{\mathrm{PBRF,loc}}(\epsilon))\right|_{\epsilon=0}
    &=\nabla_\theta f(\theta_0)^\top
      \left.\frac{d\theta_{\mathcal S}^{\mathrm{PBRF,loc}}(\epsilon)}{d\epsilon}\right|_{\epsilon=0}\nonumber\\
    &=-\nabla_\theta f(\theta_0)^\top G^{-1}c_{\mathcal S}
     =-q^\top c_{\mathcal S},
    \quad q=G^{-1}\nabla_\theta f(\theta_0).
\end{align*}
Multiplying by \(\epsilon\) gives \(\widehat A_{\mathcal S}(\epsilon)\) in~\cref{eq:standard-group-influence}.
This does not identify the first-order estimate with the nonlinear target change at a finite removal weight.

\paragraph{The quadratic objective for the first-order response.}
The affine approximation in~\cref{eq:appendix-pbrf-local-response} is the unique global minimizer of
\[
    Q_{\mathcal S}^{\mathrm{lin}}(\theta;\epsilon)
    =\tfrac12(\theta-\theta_0)^\top G(\theta-\theta_0)
       +\epsilon c_{\mathcal S}^\top(\theta-\theta_0),
\]
because its stationarity equation is \(G(\theta-\theta_0)+\epsilon c_{\mathcal S}=0\).
Thus implicit differentiation and the quadratic objective in~\cref{eq:pbrf-lin} give the same damped-GGN estimate.

The objective approximation can also be checked directly.
For \(\delta=\theta-\theta_0\),
\begin{align*}
    B(\theta_0+\delta)&=\tfrac12\delta^\top G_0\delta+o(\|\delta\|_2^2),\nonumber\\
    \sum_{i\in \mathcal S}L_i(\theta_0+\delta)
    &=\sum_{i\in \mathcal S}L_i(\theta_0)+c_{\mathcal S}^\top\delta+O(\|\delta\|_2^2).
\end{align*}
Hence, as \(\delta\to0\) with \(\mathcal S\) fixed,
\begin{align*}
    Q_{\mathcal S}^{\mathrm{PBRF}}(\theta_0+\delta;\epsilon)
    ={}&\epsilon\sum_{i\in \mathcal S}L_i(\theta_0)
       +Q_{\mathcal S}^{\mathrm{lin}}(\theta_0+\delta;\epsilon)
       +R_{\mathcal S}(\delta,\epsilon),\nonumber\\
    R_{\mathcal S}(\delta,\epsilon)
    ={}&o(\|\delta\|_2^2)+O(|\epsilon|\|\delta\|_2^2).
\end{align*}
This retains the unperturbed Bregman term to second order and the group perturbation to first order in \(\delta\).
It is not the full second-order Taylor objective at a fixed nonzero \(\epsilon\), which would also contain the group-Hessian term.
The centroid characterization in~\cref{sec:method} is exact for \(Q_{\mathcal S}^{\mathrm{lin}}\); the local argument does not guarantee a unique global nonlinear PBRF minimizer at finite removal weights.
For piecewise-smooth networks, this differentiation argument applies on neighborhoods where the required derivatives exist.
The algebraic centroid identity itself only requires the specified gradients and \(G\succ0\).

\subsection{Bregman centroids and proof of the main characterization}
\label[appendix]{app:proof-group-clustering}

We first state the centroid identity underlying the definition of Bregman information~\citep[Proposition~1]{banerjee2005clustering}.

\begin{lemma}[Fixed-group Bregman centroid identity]
\label{lem:fixed-group-centroid-identity}
Let \(\Psi\) be differentiable and convex on an open convex set \(\Omega\), and let \(a_1,\ldots,a_m\in\Omega\), with \(m\ge1\) and \(\bar a=m^{-1}\sum_i a_i\).
For every \(\mu\in\Omega\),
\begin{equation}
    \frac1m\sum_{i=1}^mD_\Psi(a_i,\mu)
    =\frac1m\sum_{i=1}^mD_\Psi(a_i,\bar a)+D_\Psi(\bar a,\mu).
    \label{eq:appendix-centroid-identity}
\end{equation}
Hence \(\bar a\) is a minimizer. It is unique if \(\Psi\) is strictly convex.
\end{lemma}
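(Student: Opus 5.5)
The plan is a direct algebraic expansion using the definition $D_\Psi(u,v)=\Psi(u)-\Psi(v)-\nabla\Psi(v)^\top(u-v)$. For fixed $\mu\in\Omega$, I will compute the difference
\[
    \frac1m\sum_{i=1}^m D_\Psi(a_i,\mu)-\frac1m\sum_{i=1}^m D_\Psi(a_i,\bar a).
\]
In each summand the terms $\Psi(a_i)$ cancel. This leaves
\[
    \Psi(\bar a)-\Psi(\mu)-\frac1m\sum_{i=1}^m\nabla\Psi(\mu)^\top(a_i-\mu)+\frac1m\sum_{i=1}^m\nabla\Psi(\bar a)^\top(a_i-\bar a).
\]
The key step is to use linearity of the inner products in $a_i$ to pull the averages inside. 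Since $\frac1m\sum_i(a_i-\bar a)=0$, the last sum vanishes. Since $\frac1m\sum_i(a_i-\mu)=\bar a-\mu$, the remaining expression is $\Psi(\bar a)-\Psi(\mu)-\nabla\Psi(\mu)^\top(\bar a-\mu)$, which is exactly $D_\Psi(\bar a,\mu)$. That gives \cref{eq:appendix-centroid-identity}.

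Convexity of $\Omega$ is used only to ensure $\bar a\in\Omega$, so that $D_\Psi(\cdot,\bar a)$ and $D_\Psi(\bar a,\cdot)$ are defined and $\bar a$ is an admissible candidate.

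For the minimization claims, note that the first term on the right of \cref{eq:appendix-centroid-identity} does not depend on $\mu$. By the first-order characterization of convexity of a differentiable $\Psi$, we have $D_\Psi(\bar a,\mu)\ge0$, with equality at $\mu=\bar a$. Hence $\bar a$ attains the minimum over $\Omega$.

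For uniqueness under strict convexity, suppose $\mu\ne\bar a$. The strict first-order inequality $\Psi(\bar a)>\Psi(\mu)+\nabla\Psi(\mu)^\top(\bar a-\mu)$ gives $D_\Psi(\bar a,\mu)>0$, so the objective at $\mu$ is strictly larger than at $\bar a$. I would justify this strict inequality briefly: apply the ordinary first-order inequality at the midpoint $\tfrac12(\bar a+\mu)$ and combine it with strict convexity along the segment from $\mu$ to $\bar a$.

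No step is a genuine obstacle. The only point requiring care is the strict first-order inequality in the uniqueness part, since strict convexity is stated in terms of chords rather than tangents. The midpoint argument above handles this in a line.
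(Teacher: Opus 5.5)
Your proposal is correct and follows the same route as the paper: expand the difference of the two averaged divergences, use $\sum_i(a_i-\bar a)=0$ to reduce it to $D_\Psi(\bar a,\mu)$, and conclude from the nonnegativity of that term, which is strictly positive for $\mu\ne\bar a$ under strict convexity. Your midpoint argument for the strict tangent inequality, and your remark that convexity of $\Omega$ guarantees $\bar a\in\Omega$, supply details the paper leaves implicit.
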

\begin{proof}
Since \(\sum_i(a_i-\bar a)=0\), the difference between the first two terms in~\cref{eq:appendix-centroid-identity} is
\[
    \Psi(\bar a)-\Psi(\mu)-\nabla\Psi(\mu)^\top(\bar a-\mu)
    =D_\Psi(\bar a,\mu).
\]
Convexity makes this quantity nonnegative; strict convexity makes it zero only when \(\mu=\bar a\).
\end{proof}

\begin{proof}[Proof of~\cref{thm:group-bregman-clustering}]
Let \(\delta=\theta-\theta_0\) and use the response points from~\cref{eq:individual-response-centers}.
Completing the square gives
\begin{align*}
    D_{\Phi_G}\!\left(\theta,\widetilde\theta_i^{(m)}(\epsilon)\right)
    &=\tfrac12(\delta+\epsilon mG^{-1}z_i)^\top
                   G(\delta+\epsilon mG^{-1}z_i)\nonumber\\
    &=\tfrac12\delta^\top G\delta+\epsilon m z_i^\top\delta
      +\tfrac{\epsilon^2m^2}{2}z_i^\top G^{-1}z_i.
\end{align*}
Averaging over \(\mathcal S\) yields
\[
    \frac1m\sum_{i\in \mathcal S}D_{\Phi_G}\!\left(\theta,\widetilde\theta_i^{(m)}(\epsilon)\right)
    =Q_{\mathcal S}^{\mathrm{lin}}(\theta;\epsilon)
      +\frac{\epsilon^2m}{2}\sum_{i\in \mathcal S}z_i^\top G^{-1}z_i.
\]
Rearranging and using the symmetry of the quadratic divergence gives~\cref{eq:group-clustering-objective} with the stated constant \(K_{\mathcal S}(\epsilon)\).
By~\cref{lem:fixed-group-centroid-identity}, its unique minimizer is
\[
    \frac1m\sum_{i\in \mathcal S}\widetilde\theta_i^{(m)}(\epsilon)
    =\theta_0-\epsilon G^{-1}\sum_{i\in \mathcal S}z_i
    =\widehat\theta_{\mathcal S}(\epsilon).
\]
The identity holds for every \(\theta\) and every real \(\epsilon\), including \(\epsilon=0\).
\end{proof}

\subsection{Full and target-projected Bregman information}
\label[appendix]{app:information-identities}

Set \(\bar z_{\mathcal S}=m^{-1}\sum_{i\in \mathcal S}z_i\).
The centered response points satisfy
\[
    \widetilde\theta_i^{(m)}(\epsilon)-\widehat\theta_{\mathcal S}(\epsilon)
    =-\epsilon mG^{-1}(z_i-\bar z_{\mathcal S}).
\]
Substitution into the quadratic divergence proves
\begin{align*}
    \frac1m\sum_{i\in \mathcal S}
    D_{\Phi_G}\!\left(\widetilde\theta_i^{(m)}(\epsilon),\widehat\theta_{\mathcal S}(\epsilon)\right)
    &=\frac{\epsilon^2m^2}{2m}\sum_{i\in \mathcal S}
      (z_i-\bar z_{\mathcal S})^\top G^{-1}(z_i-\bar z_{\mathcal S})\nonumber\\
    &=\epsilon^2m^2\mathcal B_G(\mathcal S).
\end{align*}
Thus the normalized full BI in~\cref{eq:full-bregman-information} removes only the common perturbation and group-size scale.
It still depends on the fixed source and curvature matrix \(G\).

For a differentiable scalar target \(f\), write \(g=\nabla f(\theta_0)\) and \(q=G^{-1}g\).
Define the linearized target value at each response point by
\[
    t_i^{\mathrm{lin}}
    :=g^\top\bigl(\widetilde\theta_i^{(m)}(\epsilon)-\theta_0\bigr)
    =-\epsilon m q^\top z_i.
\]
Its arithmetic mean is \(\widehat A_{\mathcal S}(\epsilon)\).
For \(\psi(t)=t^2/2\),
\begin{equation*}
    \frac1m\sum_{i\in \mathcal S}D_\psi(t_i^{\mathrm{lin}},\widehat A_{\mathcal S}(\epsilon))
    =\epsilon^2m^2\mathcal B_{\mathrm{proj}}(\mathcal S).
\end{equation*}
The singleton contribution uses weight \(\epsilon\), rather than \(m\epsilon\), and equals \(\widehat A_{\{i\}}(\epsilon)=-\epsilon q^\top z_i\).
Consequently,
\begin{equation*}
    \tfrac12\operatorname{Var}_{i\sim\mathrm{Unif}(\mathcal S)}
       \bigl(\widehat A_{\{i\}}(\epsilon)\bigr)
    =\epsilon^2\mathcal B_{\mathrm{proj}}(\mathcal S).
\end{equation*}
These are exact identities for linearized target values, not for nonlinear evaluations \(f(\widetilde\theta_i^{(m)}(\epsilon))-f(\theta_0)\).
The variance uses denominator \(m\), not \(m-1\).

\subsection{Component centroids, conservation, and whitening}
\label[appendix]{app:quadratic-bregman-duality}

We first show that component centroids preserve the original group estimate for any partition, as used in~\cref{eq:component-reconstruction}.
We then express the within-component Bregman objective in whitened coordinates.
Throughout, \(G\succ0\), \(m=|\mathcal S|\ge1\), and \(\mathcal C\) partitions \(\mathcal S\) into \(k\) nonempty components, with \(1\le k\le m\).

\paragraph{Conservation of the group estimate.}
For \(C\in\mathcal C\), define \(\pi_C=|C|/m\), \(\bar z_C=|C|^{-1}\sum_{i\in C}z_i\), and
\begin{equation}
    \widetilde\theta_C^{(m)}(\epsilon)
    :=\frac1{|C|}\sum_{i\in C}\widetilde\theta_i^{(m)}(\epsilon)
     =\theta_0-\epsilon mG^{-1}\bar z_C.
    \label{eq:cluster-response-centroids}
\end{equation}
For a fixed target, let \(g=\nabla_\theta f(\theta_0)\) and \(q=G^{-1}g\).
The component contribution is
\begin{equation*}
    \widehat A_C(\epsilon)
    =\pi_C g^\top\bigl(\widetilde\theta_C^{(m)}(\epsilon)-\theta_0\bigr)
    =-\epsilon q^\top\sum_{i\in C}z_i.
\end{equation*}
Since the components partition \(\mathcal S\), summing their gradient sums gives \(c_{\mathcal S}\), and therefore
\begin{equation*}
    \sum_{C\in\mathcal C}\widehat A_C(\epsilon)=\widehat A_{\mathcal S}(\epsilon),
    \qquad
    \sum_{C\in\mathcal C}\pi_C\widetilde\theta_C^{(m)}(\epsilon)
    =\widehat\theta_{\mathcal S}(\epsilon).
\end{equation*}
Thus contributions add, whereas centroids recover the group centroid by a size-weighted average.
Both identities hold for any partition; they do not depend on optimizing an assignment objective.
The centroid of a component in the original \(m\)-scaled cloud is not generally the standalone linearized PBRF estimate for that subgroup.
Their displacements are related by
\[
    \widehat\theta_C(\epsilon)-\theta_0
    =\pi_C\bigl(\widetilde\theta_C^{(m)}(\epsilon)-\theta_0\bigr).
\]
These identities concern the fixed additive estimate and do not imply additivity of nonlinear PBRF changes.

\paragraph{Choosing components in the same geometry.}
For a prescribed \(k\), the partition criterion in~\cref{eq:bregman-clustering} is
\begin{equation*}
    \min_{\substack{\mathcal C\text{ partitions }\mathcal S\\
                    |\mathcal C|=k}}
       W_{\mathcal C},
    \qquad
    W_{\mathcal C}:=\frac1m\sum_{C\in\mathcal C}\sum_{i\in C}
       D_{\Phi_G}\!\left(\widetilde\theta_i^{(m)}(\epsilon),
                         \widetilde\theta_C^{(m)}(\epsilon)\right).
\end{equation*}
For each fixed partition,~\cref{lem:fixed-group-centroid-identity} shows that its component means minimize the divergence over representatives.
Selecting components by this criterion refines the description of the cloud without changing the original group estimate.
It does not guarantee that the resulting partition maximizes cancellation visible to any particular target.

The quadratic generator and its conjugate are
\begin{equation*}
    \Phi_G(\theta)=\tfrac12\theta^\top G\theta,
    \qquad \Phi_G^\ast(z)=\tfrac12 z^\top G^{-1}z.
\end{equation*}
\begin{proposition}[Whitened response-space equivalence]
\label{prop:whitened-response-clustering}
Define
\begin{equation*}
    w_i:=G^{-1/2}z_i,
    \qquad \bar w_C:=\frac1{|C|}\sum_{i\in C}w_i,
\end{equation*}
where \(G^{-1/2}\) is the symmetric positive definite inverse square root.
For every partition \(\mathcal C\) into nonempty components,
\begin{equation*}
    W_{\mathcal C}
    =\frac{\epsilon^2m^2}{2m}\sum_{C\in\mathcal C}\sum_{i\in C}
       \|w_i-\bar w_C\|_2^2.
\end{equation*}
For \(\epsilon\ne0\), minimizing \(W_{\mathcal C}\) and the unscaled within-component Euclidean sum of squares gives the same globally optimal partitions.
\end{proposition}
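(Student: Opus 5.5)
The plan is a direct substitution into each summand of $W_{\mathcal C}$, followed by the observation that the global scaling does not affect the argmin.

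\textbf{Step 1: displacements from component centroids.} Fix a component $C\in\mathcal C$ and a member $i\in C$. From~\cref{eq:individual-response-centers} and~\cref{eq:cluster-response-centroids},
\[
\widetilde\theta_i^{(m)}(\epsilon)-\widetilde\theta_C^{(m)}(\epsilon)=-\epsilon m G^{-1}(z_i-\bar z_C).
\]
This is the same computation as the centered identity in~\cref{app:information-identities}, with $\mathcal S$ replaced by $C$.

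\textbf{Step 2: the divergence in whitened coordinates.} Since $D_{\Phi_G}(a,b)=\tfrac12(a-b)^\top G(a-b)$,
\[
D_{\Phi_G}\bigl(\widetilde\theta_i^{(m)}(\epsilon),\widetilde\theta_C^{(m)}(\epsilon)\bigr)
=\tfrac{\epsilon^2m^2}{2}(z_i-\bar z_C)^\top G^{-1}(z_i-\bar z_C).
\]
Write $G^{-1}=G^{-1/2}G^{-1/2}$ with the symmetric inverse square root. By linearity, $G^{-1/2}\bar z_C=\bar w_C$. The quadratic form therefore equals $\|w_i-\bar w_C\|_2^2$.

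\textbf{Step 3: sum and normalize.} Summing over $i\in C$ and $C\in\mathcal C$, and dividing by $m$, gives the stated formula for $W_{\mathcal C}$. This holds for every partition; no optimality of the partition is used.

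\textbf{Step 4: equivalence of optimal partitions.} For $\epsilon\ne0$, the factor $\epsilon^2m^2/(2m)=\epsilon^2m/2$ is a positive constant independent of $\mathcal C$. Multiplying an objective by a positive constant preserves the set of global minimizers over the finite set of $k$-partitions. Hence minimizing $W_{\mathcal C}$ and minimizing the within-component sum of squares $\sum_C\sum_{i\in C}\|w_i-\bar w_C\|_2^2$ yield identical optimal partitions.

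I expect no real obstacle. The only points to state carefully are the commutation of $G^{-1/2}$ with averaging, and the requirement $\epsilon\ne0$. When $\epsilon=0$, $W_{\mathcal C}\equiv0$ and every partition is optimal, so the equivalence fails there.
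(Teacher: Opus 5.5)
Your proposal is correct and follows essentially the same route as the paper's proof. Like the paper, you compute the centered displacement $-\epsilon mG^{-1}(z_i-\bar z_C)$, substitute it into the quadratic divergence to get $\tfrac{\epsilon^2m^2}{2}\|w_i-\bar w_C\|_2^2$, and sum over components. You then use the positive, partition-independent scale for $\epsilon\ne0$ and note the same $\epsilon=0$ caveat.
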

\begin{proof}
From~\cref{eq:individual-response-centers,eq:cluster-response-centroids},
\begin{equation*}
    \widetilde\theta_i^{(m)}(\epsilon)-\widetilde\theta_C^{(m)}(\epsilon)
    =-\epsilon mG^{-1}(z_i-\bar z_C).
\end{equation*}
Substitution into the quadratic divergence gives
\begin{equation*}
    D_{\Phi_G}\!\left(\widetilde\theta_i^{(m)}(\epsilon),
                       \widetilde\theta_C^{(m)}(\epsilon)\right)
    =\frac{\epsilon^2m^2}{2}\|w_i-\bar w_C\|_2^2.
\end{equation*}
Summing proves the identity.
For nonzero \(\epsilon\), the scale factor is positive and independent of the partition.
For \(\epsilon=0\), all parameter-response points coincide and every partition minimizes \(W_{\mathcal C}\), so equivalence with the unscaled Euclidean optimization need not hold.
\end{proof}
The equivalence is a change of coordinates, not a guarantee that a particular \(k\)-means initialization finds a global optimum.

\paragraph{Within- and between-component dispersion.}
Let \(a_i=\widetilde\theta_i^{(m)}(\epsilon)\), \(a_C=\widetilde\theta_C^{(m)}(\epsilon)\), and \(\bar a=\widehat\theta_{\mathcal S}(\epsilon)\).
Applying~\cref{lem:fixed-group-centroid-identity} within each component and summing gives
\begin{equation*}
    \underbrace{\frac1m\sum_{i\in \mathcal S}D_{\Phi_G}(a_i,\bar a)}_{T}
    =\underbrace{\frac1m\sum_{C\in\mathcal C}\sum_{i\in C}D_{\Phi_G}(a_i,a_C)}_{W_{\mathcal C}}
     +\underbrace{\sum_{C\in\mathcal C}\pi_CD_{\Phi_G}(a_C,\bar a)}_{B_{\mathcal C}}.
\end{equation*}
All three dispersions share the same positive scaling under whitening when \(\epsilon\ne0\).
This decomposes the cloud's dispersion, not its nonlinear approximation error.

\subsection{Beyond quadratic generators: a dual-space centroid representation}
\label[appendix]{app:center-shift}

The preceding results use a quadratic generator.
For completeness, we state when a linear tilt of a general generator has a shifted-center representation.
The gradient and conjugacy properties used here are standard Legendre duality; see~\citet[Theorem~2 and Appendix~A]{banerjee2005clustering}.

Let \(\Phi:\mathbb R^d\to(-\infty,+\infty]\) be proper, closed, convex, and Legendre, with nonempty \(\Theta=\operatorname{int}(\operatorname{dom}\Phi)\).
Define its Fenchel conjugate \(\Phi^\ast(\eta)=\sup_\theta\{\eta^\top\theta-\Phi(\theta)\}\) and \(\Theta^\ast=\operatorname{int}(\operatorname{dom}\Phi^\ast)\).
Then
\begin{equation*}
    \nabla\Phi:\Theta\longrightarrow\Theta^\ast
    \quad\text{is bijective, with inverse }\nabla\Phi^\ast.
\end{equation*}

\begin{theorem}[Center shift under a linear tilt]
\label{thm:general-center-shift}
Fix \(\theta_0\in\Theta\), \(a\in\mathbb R^d\), and \(\eta=\nabla\Phi(\theta_0)-a\).
There exist \(\widetilde\theta\in\Theta\) and a constant \(K\) such that
\begin{equation}
    D_\Phi(\theta,\theta_0)+a^\top(\theta-\theta_0)
    =D_\Phi(\theta,\widetilde\theta)+K
    \quad\text{for every }\theta\in\Theta
    \label{eq:appendix-center-shift-equality}
\end{equation}
if and only if
\begin{equation*}
    \eta\in\Theta^\ast=\nabla\Phi(\Theta).
\end{equation*}
In that case the center and constant are unique:
\begin{equation}
    \widetilde\theta=\nabla\Phi^\ast(\eta),
    \qquad K=-D_\Phi(\theta_0,\widetilde\theta).
    \label{eq:appendix-center-shift-solution}
\end{equation}
Moreover,
\begin{equation*}
    D_\Phi(\theta,\widetilde\theta)
    =D_{\Phi^\ast}(\eta,\nabla\Phi(\theta)).
\end{equation*}
\end{theorem}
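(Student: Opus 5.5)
We expand both sides of \cref{eq:appendix-center-shift-equality} using the definition of the Bregman divergence and compare them as functions of \(\theta\in\Theta\). For any candidate center \(\widetilde\theta\in\Theta\),
\[
    D_\Phi(\theta,\theta_0)+a^\top(\theta-\theta_0)-D_\Phi(\theta,\widetilde\theta)
    =\bigl(\nabla\Phi(\widetilde\theta)-\nabla\Phi(\theta_0)+a\bigr)^\top\theta
     +\kappa(\theta_0,\widetilde\theta,a).
\]
The term \(\Phi(\theta)\) cancels, and \(\kappa\) collects everything independent of \(\theta\). This difference is therefore an affine function of \(\theta\).

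An affine function is constant on the nonempty open set \(\Theta\) if and only if its linear coefficient vanishes. The identity therefore holds for some \(K\) exactly when \(\nabla\Phi(\widetilde\theta)=\nabla\Phi(\theta_0)-a=\eta\).

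\emph{Existence.} If \(\eta\in\Theta^\ast\), set \(\widetilde\theta=\nabla\Phi^\ast(\eta)\in\Theta\). The bijectivity of \(\nabla\Phi:\Theta\to\Theta^\ast\) with inverse \(\nabla\Phi^\ast\) then gives \(\nabla\Phi(\widetilde\theta)=\eta\), so the identity holds.

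\emph{Necessity.} If some \(\widetilde\theta\in\Theta\) satisfies the identity, then \(\eta=\nabla\Phi(\widetilde\theta)\in\nabla\Phi(\Theta)=\Theta^\ast\).

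\emph{Uniqueness of the center.} Two valid centers would have equal gradients, and injectivity of \(\nabla\Phi\) forces them to coincide.

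\emph{The constant.} Evaluating the identity at \(\theta=\theta_0\in\Theta\) makes the left side zero, so \(K=-D_\Phi(\theta_0,\widetilde\theta)\). This also shows \(K\) is unique.

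\emph{Dual identity.} We use the Legendre facts \(\Phi^\ast(\nabla\Phi(\theta))=\theta^\top\nabla\Phi(\theta)-\Phi(\theta)\) and \(\nabla\Phi^\ast(\nabla\Phi(\theta))=\theta\) for \(\theta\in\Theta\). Write \(\xi=\nabla\Phi(\theta)\) and recall \(\eta=\nabla\Phi(\widetilde\theta)\). Then
\[
    D_{\Phi^\ast}(\eta,\xi)=\Phi^\ast(\eta)-\Phi^\ast(\xi)-\theta^\top(\eta-\xi).
\]
Substitute \(\Phi^\ast(\eta)=\widetilde\theta^\top\eta-\Phi(\widetilde\theta)\) and \(\Phi^\ast(\xi)=\theta^\top\xi-\Phi(\theta)\). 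The terms \(\theta^\top\xi\) cancel, leaving
\[
    \Phi(\theta)-\Phi(\widetilde\theta)-\eta^\top(\theta-\widetilde\theta)=D_\Phi(\theta,\widetilde\theta).
\]

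The main point needing care is the necessity direction. We must use that \(\Theta\) is open and nonempty, so that an affine function constant on \(\Theta\) has zero slope. We must also remember that \(\widetilde\theta\) is required to lie in \(\Theta\), where \(\nabla\Phi\) is defined, so membership \(\eta\in\nabla\Phi(\Theta)\) follows. The identification \(\nabla\Phi(\Theta)=\Theta^\ast\) is exactly the Legendre bijectivity stated before the theorem.
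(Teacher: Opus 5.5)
Your proof is correct and follows essentially the same route as the paper. Where the paper differentiates the identity in \(\theta\) to force \(\nabla\Phi(\widetilde\theta)=\eta\), you note that \(\Phi(\theta)\) cancels, so the difference is affine in \(\theta\), and use that \(\Theta\) is open and nonempty to force its slope to vanish; this is the same argument made explicit. The remaining steps match the paper's: bijectivity of \(\nabla\Phi\) for existence and uniqueness of the center, evaluation at \(\theta=\theta_0\) for \(K\), and the Fenchel--Young equality \(\Phi^\ast(\nabla\Phi(v))=v^\top\nabla\Phi(v)-\Phi(v)\) for the dual identity.
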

\begin{proof}
Differentiating~\cref{eq:appendix-center-shift-equality} in \(\theta\) forces
\(\nabla\Phi(\widetilde\theta)=\nabla\Phi(\theta_0)-a=\eta\).
This proves necessity and uniqueness of the center.
Conversely, if \(\eta\in\Theta^\ast\), the center in~\cref{eq:appendix-center-shift-solution} exists.
Expanding the divergences shows that the difference in~\cref{eq:appendix-center-shift-equality} is constant; evaluating at \(\theta=\theta_0\) determines \(K\).
Finally, \(\Phi^\ast(\nabla\Phi(v))=v^\top\nabla\Phi(v)-\Phi(v)\) gives
\(D_\Phi(u,v)=D_{\Phi^\ast}(\nabla\Phi(v),\nabla\Phi(u))\), proving the last identity.
\end{proof}

The range condition is substantive: \(\Phi(t)=e^t\) is Legendre on \(\mathbb R\), but its gradient range is \((0,\infty)\).
For \(\theta_0=0\) and \(a=2\), the required dual coordinate is \(-1\), so no interior shifted center exists.
For the positive definite quadratic generator, the gradient range is all of \(\mathbb R^d\).

\begin{theorem}[Centroid representation in dual coordinates]
\label{thm:general-group-bregman-representation}
With \(\Phi\) as above, fix \(\theta_0\in\Theta\), a nonempty group \(\mathcal S\) of size \(m\), gradients \(z_i\in\mathbb R^d\), and \(\epsilon\in\mathbb R\).
Suppose
\begin{equation*}
    \eta_i^{(m)}:=\nabla\Phi(\theta_0)-\epsilon m z_i\in\Theta^\ast
    \quad\text{for every }i\in \mathcal S.
\end{equation*}
Let
\begin{equation*}
    \widetilde\theta_i^{(m)}:=\nabla\Phi^\ast(\eta_i^{(m)}),
    \qquad \bar\eta_{\mathcal S}:=\frac1m\sum_{i\in \mathcal S}\eta_i^{(m)}
       =\nabla\Phi(\theta_0)-\epsilon c_{\mathcal S}.
\end{equation*}
For every \(\theta\in\Theta\),
\begin{align*}
    D_\Phi(\theta,\theta_0)+\epsilon c_{\mathcal S}^\top(\theta-\theta_0)
    &=\frac1m\sum_{i\in \mathcal S}D_\Phi(\theta,\widetilde\theta_i^{(m)})
       +K_{\mathcal S}^\Phi(\epsilon)\\ %
    &=\frac1m\sum_{i\in \mathcal S}D_{\Phi^\ast}(\eta_i^{(m)},\nabla\Phi(\theta))
       +K_{\mathcal S}^\Phi(\epsilon),%
\end{align*}
where
\begin{equation*}
    K_{\mathcal S}^\Phi(\epsilon):=-\frac1m\sum_{i\in \mathcal S}D_\Phi(\theta_0,\widetilde\theta_i^{(m)}).
\end{equation*}
The unique minimizer is
\begin{equation*}
    \widehat\theta_{\mathcal S}^\Phi(\epsilon)=\nabla\Phi^\ast(\bar\eta_{\mathcal S}).
\end{equation*}
\end{theorem}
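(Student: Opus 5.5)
The plan is to reduce everything to \cref{thm:general-center-shift} applied memberwise, then average. First I would rewrite the left-hand side as an average over the group, exactly as in \cref{eq:distributed-group-objective}:
\[
D_\Phi(\theta,\theta_0)+\epsilon c_{\mathcal S}^\top(\theta-\theta_0)=\frac1m\sum_{i\in\mathcal S}\Bigl[D_\Phi(\theta,\theta_0)+\epsilon m z_i^\top(\theta-\theta_0)\Bigr].
\]
For each $i$, I apply \cref{thm:general-center-shift} with tilt $a=\epsilon m z_i$. The associated dual point is $\eta=\nabla\Phi(\theta_0)-\epsilon m z_i=\eta_i^{(m)}$, which lies in $\Theta^\ast$ by hypothesis. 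Hence each bracket equals $D_\Phi(\theta,\widetilde\theta_i^{(m)})-D_\Phi(\theta_0,\widetilde\theta_i^{(m)})$ for every $\theta\in\Theta$, with $\widetilde\theta_i^{(m)}=\nabla\Phi^\ast(\eta_i^{(m)})$. Averaging over $i$ gives the first displayed equality with the stated constant $K^\Phi_{\mathcal S}(\epsilon)$. The second equality is then termwise, using the duality identity $D_\Phi(\theta,\widetilde\theta)=D_{\Phi^\ast}(\eta,\nabla\Phi(\theta))$ from the same theorem. The formula $\bar\eta_{\mathcal S}=\nabla\Phi(\theta_0)-\epsilon c_{\mathcal S}$ is immediate by averaging.

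For the minimizer, I work in dual coordinates. Since $\nabla\Phi:\Theta\to\Theta^\ast$ is a bijection, minimizing over $\theta\in\Theta$ is the same as minimizing $\frac1m\sum_i D_{\Phi^\ast}(\eta_i^{(m)},\mu)$ over $\mu=\nabla\Phi(\theta)\in\Theta^\ast$. Because $\Theta^\ast$ is convex (Legendre duality) and contains every $\eta_i^{(m)}$, it also contains $\bar\eta_{\mathcal S}$. I then invoke \cref{lem:fixed-group-centroid-identity} with $\Psi=\Phi^\ast$ on $\Omega=\Theta^\ast$, where $\Phi^\ast$ is differentiable and strictly convex there since it is Legendre. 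The lemma says $\bar\eta_{\mathcal S}$ is the unique minimizer, so the unique primal minimizer is $\nabla\Phi^\ast(\bar\eta_{\mathcal S})$.

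I expect the main obstacle to be the domain bookkeeping rather than any computation. The points to check are:
\begin{itemize}
\item that $\bar\eta_{\mathcal S}\in\Theta^\ast$, which needs convexity of the interior of $\operatorname{dom}\Phi^\ast$;
\item that the lemma's hypotheses (differentiability and strict convexity of $\Phi^\ast$ on an open convex set) hold, which follows from Legendre-ness;
\item that uniqueness transfers through the bijection $\nabla\Phi$, and that the minimization is genuinely over $\Theta$, where divergences are finite, rather than over the whole of $\operatorname{dom}\Phi$.
\end{itemize}
If boundary points of $\operatorname{dom}\Phi$ are a concern, I would note that the identity is stated only on $\Theta$, so the minimization is over $\Theta$.

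As an alternative check of uniqueness, the objective is $D_\Phi(\theta,\theta_0)$ plus a linear term, hence strictly convex on $\Theta$. Its stationarity condition $\nabla\Phi(\theta)=\bar\eta_{\mathcal S}$ has exactly one solution in $\Theta$.
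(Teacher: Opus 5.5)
Your proposal is correct and follows the paper's proof essentially step for step. Like the paper, you distribute the tilt, apply the center-shift theorem to each summand with $a=\epsilon m z_i$, average to obtain the constant $K^\Phi_{\mathcal S}(\epsilon)$, use the dual identity termwise, and then invoke the centroid lemma with $\Psi=\Phi^\ast$ on $\Theta^\ast$ after noting by convexity that $\bar\eta_{\mathcal S}\in\Theta^\ast$; your extra strict-convexity argument for uniqueness is a valid, redundant check.
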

\begin{proof}
Distribute the linear tilt as
\begin{equation*}
    D_\Phi(\theta,\theta_0)+\epsilon c_{\mathcal S}^\top(\theta-\theta_0)
    =\frac1m\sum_{i\in \mathcal S}
       \bigl[D_\Phi(\theta,\theta_0)+\epsilon m z_i^\top(\theta-\theta_0)\bigr].
\end{equation*}
Apply~\cref{thm:general-center-shift} to each summand with \(a=\epsilon m z_i\), then use its dual identity.
This proves both representations and their common constant.
Convexity of \(\Theta^\ast\) implies \(\bar\eta_{\mathcal S}\in\Theta^\ast\).
By~\cref{lem:fixed-group-centroid-identity}, the unique optimal representative in the dual objective is \(\nabla\Phi(\theta)=\bar\eta_{\mathcal S}\), giving the stated minimizer.
\end{proof}

For a nonquadratic generator, the arithmetic centroid is in dual coordinates, not generally in parameter space, and the divergence arguments cannot generally be reversed.
The member-wise representation also requires more than existence of an aggregate shifted center: the latter needs only \(\bar\eta_{\mathcal S}\in\Theta^\ast\), whereas the former needs every \(\eta_i^{(m)}\in\Theta^\ast\).
This is a statement about a Bregman divergence plus a linear tilt, not a claim that the full nonlinear PBRF has this form.

\paragraph{Quadratic specialization and scope.}
For \(\Phi_G(\theta)=\tfrac12\theta^\top G\theta\) with \(G\succ0\),
\(\Theta^*=\mathbb R^d\), \(\nabla\Phi_G(\theta)=G\theta\), and
\(\nabla\Phi_G^*(\eta)=G^{-1}\eta\).
The range condition is then automatic, the shifted points are
\(\theta_0-\epsilon mG^{-1}z_i\), and the dual-centroid formula recovers~\cref{thm:group-bregman-clustering}.
For a nonquadratic generator, averaging in dual coordinates does not in general commute with \(\nabla\Phi^*\).
Consequently, this extension does not establish the additive parameter-response decomposition in~\cref{eq:component-reconstruction} outside the quadratic setting, nor does it provide an accuracy guarantee for E2 or nonlinear PBRF.

\newpage

\section{Experimental Details}
\label[appendix]{app:experimental-details}

This section specifies the common protocol, numerical methods, and cost model.

\subsection{Common protocol}
\label[appendix]{app:common-setup}

Each comparison uses a fixed original checkpoint, $100$ random removal groups per removal ratio, and $100$ fixed test queries.
All compared methods use the same original model, removal groups, test queries, curvature operator, and actual PBRF reference points.
Partitions and coefficients are fitted using training-side quantities only and reused across queries.
Unless stated otherwise, a method predicts the loss change of query $j$ by applying the fixed target gradient $\nabla_\theta f_j(\theta_0)$ to its estimated parameter change, as in~\cref{eq:standard-group-influence}.
The reference is $f_j(\theta_{\mathcal S}^{\rm PBRF})-f_j(\theta_0)$.

\subsection{Datasets and original models}
\label[appendix]{app:datasets}

\begin{table}[htbp]
\centering\small
\caption{Models and training-set sizes. The ridge is shared by $G$ and the PBRF proximal term; it is distinct from training regularization.}
\label{tab:appendix-settings}
\begin{tabular}{@{}lp{.43\linewidth}rrr@{}}
\toprule
Dataset & Model & Classes & $N$ & Ridge\\
\midrule
Dogfish & Logistic head, 2,048 features + intercept & 2 & 1,800 & 0.01\\
MNIST & ReLU MLP, $784$--$512$--$256$--$10$ & 10 & 60,000 & 0.1001\\
CIFAR-10 & Myrtle/DAWNBench ResNet9 & 10 & 50,000 & 0.3005\\
\bottomrule
\end{tabular}
\end{table}

\paragraph{Dogfish.}
The head uses fixed Inception-v3 features~\citep{koh2017understanding,koh2019accuracy}, an $\ell_2$-regularized logistic initialization, and full-batch L-BFGS refinement.
Its objective is mean binary cross-entropy plus $10^{-4}\|\theta\|^2$, including the intercept.

\paragraph{MNIST.}
We use Adam for 30 epochs (learning rate $10^{-3}$, batch size 256), then full-batch L-BFGS (learning rate $0.5$, history 20, at most 80 steps).
The objective is mean cross-entropy plus $\tfrac12 10^{-4}\|\theta\|^2$.

\paragraph{ResNet9.}
We use Nesterov SGD for 24 epochs (batch size 512, momentum $0.9$, weight decay $5\times10^{-4}$).
The learning rate follows a piecewise-linear $0\to0.4\to0$ schedule, peaking at epoch 5.
Training uses channel normalization, random crops with padding 4, horizontal flips, and $8\times8$ Cutout.
Full-batch L-BFGS refinement uses clean inputs, history 10, and successive (learning rate, step budget) pairs $(0.1,10)$, $(0.01,20)$, and $(0.01,60)$, without line search.
Influence and PBRF evaluations use clean inputs; ResNet9 batch-normalization buffers remain fixed.

\subsection{Groups, targets, and evaluation}
\label[appendix]{app:diagnostic-details}

\paragraph{Sampling and targets.}
Each group is sampled uniformly without replacement, without class quotas; independently drawn groups may overlap.
Thus, $100$ groups need not be disjoint subsets of the training set.
Dogfish queries come from its 400-example evaluation pool.
MNIST uses ten queries per class from a 6,000-example evaluation pool, disjoint from the 4,000 validation examples; ResNet9 also uses ten test queries per class.
The class profiles in~\cref{app:decomposition-details} instead average over each class in the full 10,000-example MNIST test set.

\paragraph{Query-wise metrics.}
For each fixed query, we compare the method's predictions with the actual PBRF loss changes across the $100$ groups, using
\begin{equation*}
 R^2=1-\frac{\mathrm{SSE}}{\mathrm{SST}}.
\end{equation*}
Here SSE is the sum of squared prediction errors, and SST is the sum of squared deviations of the reference values from their mean across groups.
No regression, affine recalibration, or rescaling is fitted to the predictions.
Spearman correlation measures ranking across groups for the same query, whereas raw $R^2$ measures scale-normalized value accuracy and may be negative.
We report medians and, where indicated, interquartile ranges across queries.
Measured results are displayed to three significant figures; a displayed value of $1.00$ may reflect rounding rather than exact agreement.
Tests and reported differences use unrounded scores, and reproducibility-critical settings retain their exact values.
We neither average query losses before computing these metrics nor pool query--group pairs into one correlation.

Within each comparison, all methods and both metrics use a common query mask.
A query is excluded if its reference or any compared prediction is numerically constant: the range across groups is at most $10^{-14}$ plus $10^{-10}$ times the largest absolute value.
This also excludes constant predictions for which raw $R^2$ alone would be defined.
The main Dogfish and MNIST comparisons retain all $100$ queries; the ResNet9 four-method comparison retains $97$.
All retain $100$ groups.

\paragraph{BI normalization and figure selection.}
The BI plots use $m\mathcal B_{\mathrm{proj}}(\mathcal S)$ without the common removal factor $N^{-2}$.
Full BI and Euclidean dispersion use the corresponding summed normalization, with Euclidean dispersion defined by $\tfrac12\sum_{i\in\mathcal S}\|z_i-\bar z_{\mathcal S}\|^2$.
These fixed-size positive scalings do not change Spearman correlation with squared GI error.
Each BI panel uses the query nearest the median query-wise BI--error correlation for the corresponding model.
Prediction-scatter annotations summarize median query-wise performance, not the scores for the displayed query.
\Cref{app:geometry-ablation} compares all three dispersion measures on Dogfish and ten-class MNIST at $1\%$ removal.
The three-model projected-BI size sweep is reported separately in~\cref{app:group-size-ablation}.

\paragraph{Paired significance tests.}
\label[appendix]{app:statistical-testing}
For each metric, we pair the Euclidean Joint or Intra E2 score at $k=2$ with the single-component E2 ($k=1$) score for the same query.
We use two-sided Wilcoxon signed-rank tests~\citep{wilcoxon1945individual} of the query-wise differences.
The null is that the paired-difference distribution is symmetric about zero; this is not a test solely of the difference between the two marginal medians.
We round differences to $12$ decimal places before ranking to merge floating-point ties, discard zero differences from the rank statistic, assign average ranks to tied absolute differences, and use the tie-corrected normal approximation without continuity correction (SciPy 1.13.1).
The displayed medians use the original unrounded scores and the common query mask above.

Holm step-down correction~\citep{holm1979simple} is applied to one family of $26$ tests: $13$ available method--setting comparisons times two metrics, raw $R^2$ and Spearman.
The comparisons are Joint at Dogfish $1\%$ and Joint/Intra at Dogfish $10\%$/$30\%$, MNIST $1\%$/$10\%$/$30\%$, and ResNet9 $1\%$.
All comparisons, including decreases, enter this family.
\Cref{tab:appendix-significance} gives the medians and adjusted $p$-values.
We describe a significant difference as an improvement only if both the difference of marginal medians and the median paired difference are positive.
The Spearman tests compare the two methods' query-wise correlations; they do not test correlation against zero.
These query-paired tests condition on the fixed checkpoint and sampled removal groups.

\subsection{Curvature and component construction}
\label[appendix]{app:clustering-implementation}

We use the full-training-set GGN with the ridge in~\cref{tab:appendix-settings} and removal weight $\epsilon=-1/N$.
Per-example gradients and edit curvature exclude training regularization, which is not added again to the PBRF objective.
Dogfish uses dense FP64 algebra; MNIST uses matrix-free FP64 products, with checked relative residuals of $10^{-8}$ for GI and at most $5\times10^{-8}$ for component directions.
ResNet9 uses mixed-precision inverse solves with FP64 residual checks at relative tolerance $1.1\times10^{-6}$.

Bregman and Euclidean clustering apply the $G^{-1}$ and identity metrics to raw gradients, respectively.
Euclidean clustering does not replace $G$ in the response directions or the common E2 term.
Dogfish uses ten Lloyd restarts~\citep{lloyd1982least}, a random first center followed by farthest-point initialization, and at most 300 iterations per restart; it stops on unchanged assignments or relative center movement at most $10^{-8}$ and selects the lowest-SSE restart.
MNIST and ResNet9 use two restarts with balanced random initial assignments, at most 100 iterations each, and convergence on unchanged assignments.
We require both restarts to converge and retain the partition with the lower clustering objective.

\subsection{PBRF reference computation}
\label[appendix]{app:pbrf-reference}

Each group's actual PBRF reference point is computed once using~\cref{eq:pbrf} and reused across methods, component counts, and queries.
\Cref{tab:reference-settings} specifies the initialization and optimization budget.
The MNIST budget consists of up to 200 optimizer iterations, not necessarily 200 executed iterations: individual calls may terminate early.
The reported comparisons use all $100$ group references per setting; the query mask above is not a group-level solver exclusion.

\begin{table}[htbp]
\centering\small
\setlength{\tabcolsep}{4pt}
\caption{Solver settings used to compute the actual PBRF reference points.}
\label{tab:reference-settings}
\begin{tabular}{@{}lp{.18\linewidth}p{.34\linewidth}p{.30\linewidth}@{}}
\toprule
Model & Precision / init. & Solver & Budget and stopping \\
\midrule
Logistic
& FP64; GI initialization
& Newton--PCG with strong-Wolfe L-BFGS fallback
& PCG up to 100 steps; L-BFGS fallback up to 2{,}000 steps;
  gradient tolerance $10^{-10}$ \\

MLP
& FP64; GI initialization
& Strong-Wolfe L-BFGS, learning rate 1, history 50
& 20 calls, each with up to 10 iterations \\

ResNet
& Original-model initialization
& Strong-Wolfe L-BFGS, learning rate 0.5, history 20
& 100 FP32 warm steps followed by 40 FP64 steps \\
\bottomrule
\end{tabular}
\end{table}

\subsection{E2 implementation}
\label[appendix]{app:e2-settings}

We use the component directions $\Delta\theta_r$ and coefficients $\alpha_r$ from~\cref{eq:cluster-optimization-space}, without changing the partition or recomputing directions during optimization.
The following equations hold for each $r=1,\ldots,k$.
For removal ($\epsilon=-1/N$), differentiating the Joint objective in~\cref{eq:cluster-e2-objective} gives the $k\times k$ linear system
\begin{equation}
 \sum_{s=1}^k\alpha_s^\star\,
 \Delta\theta_r^\top
 \left(G-\frac1N\sum_{i\in\mathcal S}J_i^\top H_iJ_i\right)
 \Delta\theta_s
 =\frac1N\Delta\theta_r^\top c_{\mathcal S}.
 \label{eq:appendix-joint-system}
\end{equation}
For Intra, differentiating~\cref{eq:intra-e2-objective} instead gives
\begin{equation}
 \sum_{s=1}^k\alpha_s^\star\,\Delta\theta_r^\top G\Delta\theta_s
 -\frac{\alpha_r^\star}{N}\sum_{i\in\mathcal C_r}
      \Delta\theta_r^\top J_i^\top H_iJ_i\Delta\theta_r
 =\frac1N\Delta\theta_r^\top c_{\mathcal S}.
 \label{eq:appendix-intra-system}
\end{equation}
We solve these systems directly after checking positive definiteness and verify the solve residuals, without method-specific damping.
Coefficients are unconstrained: no simplex, nonnegativity, or target-dependent fitting is used.

For Joint E2, the parameter-space curvature is the retained-example GGN plus the positive ridge, so the coefficient matrix is positive definite when the component directions are linearly independent; the corresponding property is checked separately for Intra.
Intra removes only parts of the edit curvature: the common terms $\Delta\theta_r^\top G\Delta\theta_s$ generally remain nonzero between different components, and the coefficients are still jointly optimized.
At $k=1$, both partition geometries give the same single component; Joint and Intra coincide and only rescale GI.
Minimizing either surrogate does not guarantee an improvement against actual PBRF for every group or query.

\paragraph{Full E2.}
Full E2 minimizes the same Joint quadratic objective over the full parameter space, without restricting the parameter change to the component span.
Dogfish uses FP64 PCG with a dense Cholesky fallback.
MNIST uses FP64 matrix-free CG with an independently checked relative residual of $10^{-8}$.
Its linear query readout is evaluated against the same actual PBRF reference.
Full E2 is an approximation baseline, not the nonlinear reference or a guaranteed upper bound on prediction accuracy.
\Cref{tab:appendix-correction-main} reports the available full-space results.

\subsection{Component-analysis protocol}
\label[appendix]{app:decomposition-details}

We use the original full-group Bregman $k=2$ splits of the MNIST $10\%$ groups and order components by size.
Contributions are totals over training examples, not per-example averages.
For each component, we form a ten-dimensional class profile by averaging its linear influence $\nabla_\theta f_j(\theta_0)^\top\Delta\theta_r$ within each test class, using the full 10,000-example test set.
Cosine similarity compares the two ten-dimensional profiles within each group; the illustrated profiles are unit-normalized.
These class averages are distinct from the $100$ individual-query targets used to score prediction accuracy.

Omission retains the full-group PBRF reference and sets one component contribution to zero.
GI assigns unit weight to both components; single-component E2 ($k=1$) applies the same optimized weight to both; $k=2$ Joint E2 uses the two optimized component weights.
Large-only and small-only predictions retain the corresponding contribution at its full-fit weight and set the other to zero.
We neither refit the remaining coefficient nor change the removed set to the retained component.
\Cref{app:omission-results} reports the class-pattern case and the full/large-only/small-only prediction results.

\subsection{Computational complexity and timing protocol}
\label[appendix]{app:e2-complexity}

Let $m$ be group size, $d$ parameter dimension, $k$ component count, $t$ a bound on the total Lloyd iterations across restarts, $\mathcal C_{\rm inv}$ the cost of an inverse-$G$ solve, and $\gamma$ the per-example cost of a directional GGN product.
We assume additive work over examples when counting GGN products.
We separate partition construction, component-response solves, edit curvature, and the coefficient solve.

\paragraph{Partition construction.}
Given explicit gradients, a center-contrast implementation has costs
\begin{align}
 T_{\rm Bregman,partition}
 &=O\!\left(t\bigl[(k-1)\mathcal C_{\rm inv}+mkd\bigr]\right),\nonumber\\
 T_{\rm Euclidean,partition}&=O(tmkd).
 \label{eq:e2-clustering-complexity}
\end{align}
These expressions exclude gradient construction and shared model/data preparation.
Euclidean clustering removes the $k-1$ center-contrast metric solves per Lloyd iteration, not the inverse solves needed to form the final component responses.
The matrix-free implementation evaluates the same geometry through products rather than forming a $d\times d$ inverse.
For dense Dogfish, a Cholesky factorization of $G$ costs $O(d^3)$ once per checkpoint, whitening $m$ gradients costs $O(md^2)$, and Lloyd work costs $O(tmkd)$.
Euclidean clustering omits the metric preparation/solves, but the original $G$ is still used after partitioning.

\paragraph{Correction at fixed directions.}
Joint evaluates each of the $k$ directions over all $m$ removed examples, whereas Intra evaluates an example only along its own component's direction.
With vector-valued products and coefficient-space projection, the edit costs are
\begin{equation}
 T_{\rm Joint,edit}=O(km\gamma+dk^2),
 \qquad
 T_{\rm Intra,edit}=O(m\gamma+dk).
 \label{eq:e2-edit-complexity}
\end{equation}
The example--direction count drops from $km$ to $\sum_r|\mathcal C_r|=m$, regardless of component balance.
Both methods retain the global component-response solves, with cost $O(k\mathcal C_{\rm inv})$, the common matrix with entries $\Delta\theta_r^\top G\Delta\theta_s$, and a dense $k\times k$ coefficient solve costing $O(k^3)$.
Forming the common matrix from scratch requires $k$ applications of $G$ and $O(dk^2)$ inner-product work; cached identities may reduce this work.
Intra does not turn $G$ into smaller parameter-space blocks.
Its edit cost is of the same order as single-component E2 ($k=1$), so these expressions alone do not imply an end-to-end speedup over $k=1$.

\paragraph{Partition timing.}
We use $100$ MNIST $1\%$ groups at $k=2$, FP64, with each method pair on the same GPU: 82 A100-SXM4-80GB, five A100-SXM4-40GB, and 13 H200 NVL pairs.
Each method is timed once per group; Bregman precedes Euclidean, without dedicated timing warm-up.
Both use two restarts and unchanged-assignment stopping, with a 100-iteration cap per restart.
Both restarts converge for every group.
The total iterations across restarts have a median of nine for each method, with ranges of 6--12 for Bregman and 6--16 for Euclidean.
The wall-clock interval includes group-gradient preparation, metric solves within clustering, Lloyd assignments, restart selection, and progress serialization.
Shared model/data/operator construction, baseline validation, final component-response solves, E2, and query readout are excluded.
Shared preparation is neither timed nor amortized into these values.
Scalar and CPU operations synchronize within the computation; no extra CUDA synchronization was placed at the timer boundaries.
These are paired stage measurements over different groups, not repeated cold-start benchmarks.

\paragraph{Edit-curvature timing.}
\Cref{tab:e2-cost} uses the $49$ matched MNIST $10\%$ groups executed on an RTX3090 in FP64, with fixed Euclidean partitions and cached directions.
Each method is timed once per group in the existing worker order, rather than in independently warmed-up repeated trials.
The Joint interval excludes coefficient-space projection; the Intra interval includes scalar direction products and auxiliary vector accumulation.
Both exclude partition construction, response inverse solves, the common full-source Gram computation, the coefficient solve, and final query readout.
The intervals therefore document the instrumented edit stages, not identical low-level instruction sets or end-to-end runtimes.
\Cref{app:runtime-results} reports the measured partition and edit-curvature costs under these protocols.

\newpage
\section{Supplementary Experimental Results}
\label[appendix]{app:supplementary-results}

We distinguish the error diagnostic from prediction accuracy, and partition geometry from the curvature retained in the response and correction.
All measured results use the protocol in~\cref{app:experimental-details}; table entries are query-wise medians unless stated otherwise.

\subsection{Group-size ablation}
\label[appendix]{app:group-size-ablation}

\paragraph{BI and squared GI error.}
\Cref{tab:appendix-bi-size} varies the removal ratio over $1,2,5,10,20,35,50\%$ for all three models.
Each ratio uses $100$ random groups and the same $100$ queries within a model's sweep.
For every query, we correlate projected BI with squared GI error across groups, then summarize the $100$ correlations by their median.
The association weakens with removal size: between $1\%$ and $50\%$, the median decreases from $0.865$ to $0.140$ on Dogfish, from $0.701$ to $0.121$ on MNIST, and from $0.525$ to $0.145$ on ResNet9.
These diagnostic correlations are not the prediction--reference correlations used to evaluate correction methods.

\begin{table}[htbp]
\centering\small
\setlength{\tabcolsep}{4pt}
\caption{Removal-ratio ablation for projected BI versus squared GI error. Each entry is the median of $100$ query-wise Spearman correlations, each computed across $100$ random groups. The original model and query bank are fixed within each model's size sweep.}
\label{tab:appendix-bi-size}
\begin{tabular}{@{}lrrrrrrr@{}}
\toprule
Model & $1\%$ & $2\%$ & $5\%$ & $10\%$ & $20\%$ & $35\%$ & $50\%$ \\
\midrule
Dogfish logistic & 0.865 & 0.815 & 0.676 & 0.549 & 0.475 & 0.327 & 0.140 \\
MNIST-10 MLP & 0.701 & 0.656 & 0.580 & 0.453 & 0.340 & 0.169 & 0.121 \\
CIFAR-10 ResNet9 & 0.525 & 0.413 & 0.290 & 0.218 & 0.190 & 0.146 & 0.145 \\
\bottomrule
\end{tabular}
\end{table}

\paragraph{Correction accuracy and paired comparisons.}
\Cref{tab:appendix-significance} compares single-component E2 ($k=1$) with Euclidean Joint and Intra E2 ($k=2$) for the available removal ratios.
Dogfish and MNIST cover $1,10,30\%$; ResNet9's correction comparison is at $1\%$.
The Dogfish $1\%$ Intra comparison is unavailable and is not filled by a result from another setting.
The group and query banks are matched across methods within each setting; the diagnostic and correction sweeps are separate experiments.
The paired tests follow~\cref{app:diagnostic-details} and use all common queries, not a selected representative query.

On MNIST at $10\%$, both Joint and Intra improve raw $R^2$ significantly in this query-paired analysis, although the Intra median gain is only $0.000635$.
Statistical significance therefore should not be confused with the magnitude of the improvement.
At MNIST $30\%$, Intra instead decreases raw $R^2$ significantly while improving Spearman, separating value accuracy from ranking accuracy.

\begin{table}[htbp]
\centering\small
\setlength{\tabcolsep}{4pt}
\caption{Median query-wise accuracy and two-sided query-paired $p$-values for Euclidean E2 ($k=2$) versus single-component E2 ($k=1$).
Holm correction covers all $26$ comparisons jointly across settings, variants,
and both metrics (\cref{app:diagnostic-details}).
The Spearman tests compare methods and are not tests of zero correlation.
MNIST $30\%$ Intra shows a significant decrease in raw $R^2$, not an improvement.}
\label{tab:appendix-significance}
\begin{tabular}{@{}llrrrrr@{}}
\toprule
Removal & Metric & $k=1$ & Joint $k=2$ & $p_{\rm adj}$ & Intra $k=2$ & $p_{\rm adj}$ \\
\midrule

\multicolumn{7}{@{}l}{\textit{Dogfish}} \\

$1\%$
& $R^2$      & 0.995 & 0.999 & $1.01\times10^{-16}$ & \multicolumn{2}{c}{---} \\
& $\rho$    & 0.998 & 1.00  & $1.35\times10^{-16}$ & \multicolumn{2}{c}{---} \\ \cmidrule{2-7}
$10\%$
& $R^2$      & 0.946 & 0.985 & $1.01\times10^{-16}$ & 0.982 & $1.01\times10^{-16}$ \\
& $\rho$    & 0.985 & 0.994 & $1.01\times10^{-16}$ & 0.993 & $1.15\times10^{-16}$ \\ \cmidrule{2-7}
$30\%$
& $R^2$      & 0.891 & 0.945 & $1.40\times10^{-16}$ & 0.945 & $1.65\times10^{-15}$ \\
& $\rho$    & 0.981 & 0.989 & $1.36\times10^{-15}$ & 0.984 & $2.87\times10^{-5}$ \\

\midrule
\multicolumn{7}{@{}l}{\textit{MNIST}} \\
$1\%$
& $R^2$      & 0.998 & 1.00  & $2.83\times10^{-16}$ & 1.00  & $3.38\times10^{-16}$ \\
& $\rho$    & 0.999 & 1.00  & $1.02\times10^{-16}$ & 1.00  & $1.02\times10^{-16}$ \\ \cmidrule{2-7}
$10\%$
& $R^2$      & 0.993 & 0.995 & $5.64\times10^{-15}$ & 0.994 & $2.22\times10^{-6}$ \\
& $\rho$    & 0.997 & 0.998 & $3.16\times10^{-15}$ & 0.998 & $1.24\times10^{-14}$ \\ \cmidrule{2-7}
$30\%$
& $R^2$      & 0.964 & 0.968 & $4.23\times10^{-9}$  & 0.962 & $2.34\times10^{-4}$ \\
& $\rho$    & 0.992 & 0.993 & $3.46\times10^{-7}$  & 0.992 & $1.96\times10^{-4}$ \\

\midrule
\multicolumn{7}{@{}l}{\textit{ResNet9}} \\
$1\%$
& $R^2$      & 0.965 & 0.975 & $2.83\times10^{-16}$ & 0.974 & $2.83\times10^{-16}$ \\
& $\rho$    & 0.986 & 0.989 & $7.27\times10^{-16}$ & 0.989 & $5.37\times10^{-16}$ \\

\bottomrule
\end{tabular}
\end{table}

\Cref{tab:appendix-size-sensitivity} includes GI and additional $k=10$ comparisons at these supplementary sizes.
For MNIST at $30\%$, Euclidean Intra's median $R^2$ is below single-component E2 ($k=1$) at both $k=2$ and $k=10$, while Joint improves as more directions are available.
This is an observed accuracy--cost trade-off, not a guarantee that either approximation is uniformly preferable.

\begin{table}[htbp]
\centering\small
\setlength{\tabcolsep}{5pt}
\caption{Additional removal ratios, with $100$ random groups and $100$ retained fixed queries per setting.
All methods within a setting share the checkpoint, groups, queries, and actual PBRF references.
The selected $k=2$ and $k=10$ comparisons expose removal-size and component-count sensitivity.}
\label{tab:appendix-size-sensitivity}
\begin{tabular}{@{}lrr@{}}
\toprule
Method & Spearman & $R^2$ \\
\midrule

\multicolumn{3}{@{}l}{\textit{Dogfish ($10\%$)}} \\
GI & 0.988 & 0.901 \\
Single-component E2 ($k=1$) & 0.985 & 0.946 \\
Euclidean Joint E2 ($k=2$) & 0.994 & 0.985 \\
Euclidean Intra E2 ($k=2$) & 0.993 & 0.982 \\
Euclidean Joint E2 ($k=10$) & 1.00 & 0.997 \\
Euclidean Intra E2 ($k=10$) & 0.990 & 0.981 \\

\midrule
\multicolumn{3}{@{}l}{\textit{MNIST ($1\%$)}} \\
GI & 0.999 & 0.990 \\
Single-component E2 ($k=1$)  & 0.999 & 0.998 \\
Euclidean Joint E2 ($k=2$) & 1.00 & 1.00 \\
Euclidean Intra E2 ($k=2$) & 1.00 & 1.00 \\
Euclidean Joint E2 ($k=10$) & 1.00 & 1.00 \\
Euclidean Intra E2 ($k=10$) & 0.999 & 0.999 \\

\midrule
\multicolumn{3}{@{}l}{\textit{MNIST ($30\%$)}} \\
GI & 0.992 & 0.921 \\
Single-component E2 ($k=1$) & 0.992 & 0.964 \\
Euclidean Joint E2 ($k=2$) & 0.993 & 0.968 \\
Euclidean Intra E2 ($k=2$) & 0.992 & 0.962 \\
Euclidean Joint E2 ($k=10$) & 0.995 & 0.977 \\
Euclidean Intra E2 ($k=10$) & 0.990 & 0.953 \\

\bottomrule
\end{tabular}
\end{table}

\subsection{Geometry ablation and the Euclidean approximation}
\label[appendix]{app:geometry-ablation}

Three considerations motivate the Euclidean approximation:
similar associations between dispersion and GI error,
similar correction accuracy across partition geometries,
and precedents in gradient-based attribution.

\paragraph{Similar dispersion--error associations.}
\Cref{tab:geometry-ablation} compares projected BI, full BI, and raw-gradient Euclidean dispersion using the same
Dogfish and MNIST $1\%$ groups and query-specific squared GI errors.
Full BI and Euclidean dispersion yield similar median correlations:
$0.759$ versus $0.760$ on Dogfish and $0.556$ versus $0.558$ on MNIST.
Projected BI instead reaches $0.865$ and $0.701$, respectively.
Thus, replacing the curvature-weighted dispersion with its Euclidean counterpart has little effect on the observed
error association, whereas target projection provides additional query-specific information.
This comparison concerns dispersion measures; the effect of partition geometry on correction accuracy is examined
next.

\begin{table}[htbp]
\centering\small
\setlength{\tabcolsep}{4pt}
\caption{Geometry ablation on the same random $1\%$ groups and fixed queries as the BI experiment. Entries are medians over $100$ query-wise Spearman correlations with squared GI error, each computed across $100$ groups. Euclidean dispersion uses raw gradients, not parameter responses.}
\label{tab:geometry-ablation}
\begin{tabular}{@{}lrrr@{}}
\toprule
Model & Projected BI & Full BI & Euclidean \\
\midrule
Dogfish logistic & 0.865 & 0.759 & 0.760 \\
MNIST-10 MLP & 0.701 & 0.556 & 0.558 \\
\bottomrule
\end{tabular}
\end{table}

\paragraph{Similar correction accuracy.}
\Cref{tab:clustering-curvature} compares Bregman and Euclidean partitions under both Joint and Intra E2 at $k=2$ on
Dogfish $30\%$ and MNIST $10\%$ groups.
Only the partition metric changes; component responses and coefficient fitting retain the original $G$.
The two geometries yield similar prediction accuracy under either curvature scope.
For Joint E2, median raw $R^2$ rounds to $0.945$ on Dogfish and $0.995$ on MNIST for both geometries.
\Cref{tab:appendix-ksweep-dogfish30,tab:appendix-ksweep-mnist10,tab:appendix-ksweep-dogfish10} extend this
comparison across component counts.
These results support the approximation empirically rather than establishing equivalence between the partitions.

\paragraph{Precedents in gradient-based attribution.}
Gradient-inner-product attribution provides a related precedent for omitting curvature weighting.
TracIn~\citep{pruthi2020estimating} approximates training-induced loss changes using gradient inner products
accumulated along training checkpoints.
At a single checkpoint, this inner-product form is related algebraically to replacing inverse curvature by the
identity.
The full trajectory-based method is not identical to that substitution in classical influence functions.
In our method, the identity metric is used only to construct the partition; component responses and coefficient
fitting still use the original $G$.

Together, these considerations motivate Euclidean clustering as a practical partition approximation without
requiring $G$ to be close to the identity or the two geometries to produce identical assignments.
The resulting partition-construction savings are reported in~\cref{app:runtime-results}, with complexity and
measurement protocols in~\cref{app:e2-complexity}.

\subsection{Component count and Joint--Intra correction}
\label[appendix]{app:k-sweep}

\paragraph{Rescaling, reweighting, and the full-space comparison.}
\Cref{tab:appendix-correction-main} collects the main correction settings: Dogfish at $30\%$ removal, MNIST at $10\%$, and ResNet9 at $1\%$.
The $k=1$ comparison isolates rescaling along the original GI direction; $k>1$ also permits changes in relative component contributions.
At $k=1$, both geometries give the same single component, and Joint and Intra E2 coincide.
Full E2 removes the component-span restriction but retains the same quadratic edit approximation.
Its stronger results on Dogfish and MNIST indicate room for improvement beyond the restricted response space; they do not make Full E2 equivalent to actual PBRF.

\begin{table}[htbp]
\centering\small
\setlength{\tabcolsep}{5pt}
\caption{Correction results against actual PBRF reference points.
All settings use $100$ random groups; metrics are medians over the common retained queries
($100$ for Dogfish and MNIST, and $97$ for ResNet9).
All methods within a setting share the checkpoint, groups, queries, and actual PBRF references.
Full E2 is an additional approximation, not the reference itself.}
\label{tab:appendix-correction-main}
\begin{tabular}{@{}lrr@{}}
\toprule
Method & Spearman & $R^2$ \\
\midrule

\multicolumn{3}{@{}l}{\textit{Dogfish ($30\%$)}} \\
GI & 0.984 & 0.862 \\
Single-component E2 ($k=1$) & 0.981 & 0.891 \\
Bregman Joint E2 ($k=2$) & 0.989 & 0.945 \\
Bregman Intra E2 ($k=2$) & 0.984 & 0.945 \\
Euclidean Joint E2 ($k=2$) & 0.989 & 0.945 \\
Euclidean Intra E2 ($k=2$) & 0.984 & 0.945 \\
Full E2 & 1.00 & 0.990 \\

\midrule
\multicolumn{3}{@{}l}{\textit{MNIST ($10\%$)}} \\
GI & 0.997 & 0.978 \\
Single-component E2 ($k=1$) & 0.997 & 0.993 \\
Bregman Joint E2 ($k=2$) & 0.998 & 0.995 \\
Bregman Intra E2 ($k=2$) & 0.998 & 0.993 \\
Euclidean Joint E2 ($k=2$) & 0.998 & 0.995 \\
Euclidean Intra E2 ($k=2$) & 0.998 & 0.994 \\
Full E2 & 0.999 & 0.999 \\

\midrule
\multicolumn{3}{@{}l}{\textit{ResNet9 ($1\%$)}} \\
GI & 0.986 & 0.891 \\
Single-component E2 ($k=1$) & 0.986 & 0.965 \\
Euclidean Joint E2 ($k=2$) & 0.989 & 0.975 \\
Euclidean Intra E2 ($k=2$) & 0.989 & 0.974 \\

\bottomrule
\end{tabular}
\end{table}

\paragraph{Complete component-count sweeps.}
\Cref{tab:appendix-ksweep-dogfish30,tab:appendix-ksweep-mnist10} report $k\in\{1,2,4,6,8,10\}$ on the two main correction cohorts.
\Cref{tab:appendix-ksweep-dogfish10} additionally reports Dogfish at $10\%$ removal, allowing comparison with MNIST at the same removal ratio.
For Dogfish $10\%$, median raw $R^2$ increases from $0.901$ for GI to $0.946$ for $k=1$ E2 and $0.985$ for Bregman Joint E2 at $k=2$; Euclidean Intra E2 at $k=2$ reaches $0.982$.
For MNIST $10\%$, the corresponding Bregman Joint improvement is smaller, from $0.993$ at $k=1$ to $0.995$ at $k=2$.
Bregman and Euclidean denote only the partition metric; both retain the full $G$ in component responses and coefficient fitting.
Joint and Intra differ only in the additional edit-curvature term, as specified in~\cref{eq:appendix-joint-system,eq:appendix-intra-system}.

Euclidean and Bregman Joint E2 have similar accuracy in these settings.
Intra saves edit-curvature work but need not preserve Joint's accuracy: its raw $R^2$ decreases at larger $k$ in the reported cohorts.
The results therefore do not establish that inter-component edit curvature is universally negligible or that increasing $k$ always improves the approximation.
Partitions obtained at different $k$ need not define nested response spaces.

\begin{table}[htbp]
\centering\small
\setlength{\tabcolsep}{3.5pt}
\caption{Dogfish, $30\%$: component-count sweep on the same $100$ groups and
$100$ queries. Each entry reports the median over fixed test queries. The four variants
coincide at $k=1$.}
\label{tab:appendix-ksweep-dogfish30}
\begin{tabular}{@{}rcccccccc@{}}
\toprule
& \multicolumn{4}{c}{Spearman $\rho$} & \multicolumn{4}{c}{$R^2$} \\
\cmidrule(lr){2-5}\cmidrule(lr){6-9}
& \multicolumn{2}{c}{Joint E2} & \multicolumn{2}{c}{Intra E2}
& \multicolumn{2}{c}{Joint E2} & \multicolumn{2}{c}{Intra E2} \\
\cmidrule(lr){2-3}\cmidrule(lr){4-5}
\cmidrule(lr){6-7}\cmidrule(lr){8-9}
$k$
& Bregman & Euclidean & Bregman & Euclidean
& Bregman & Euclidean & Bregman & Euclidean \\
\midrule
1  & 0.981 & 0.981 & 0.981 & 0.981 & 0.891 & 0.891 & 0.891 & 0.891 \\
2  & 0.989 & 0.989 & 0.984 & 0.984 & 0.945 & 0.945 & 0.945 & 0.945 \\
4  & 0.993 & 0.993 & 0.984 & 0.984 & 0.968 & 0.970 & 0.951 & 0.950 \\
6  & 0.995 & 0.996 & 0.982 & 0.982 & 0.978 & 0.978 & 0.946 & 0.947 \\
8  & 0.996 & 0.996 & 0.980 & 0.979 & 0.981 & 0.981 & 0.941 & 0.941 \\
10 & 0.997 & 0.997 & 0.977 & 0.978 & 0.983 & 0.983 & 0.939 & 0.940 \\
\bottomrule
\end{tabular}
\end{table}

\begin{table}[htbp]
\centering\small
\setlength{\tabcolsep}{3.5pt}
\caption{MNIST, $10\%$: component-count sweep on the same $100$ groups and
$100$ queries. Each entry reports the median over fixed test queries. The four variants
coincide at $k=1$.}
\label{tab:appendix-ksweep-mnist10}
\begin{tabular}{@{}rcccccccc@{}}
\toprule
& \multicolumn{4}{c}{Spearman $\rho$} & \multicolumn{4}{c}{$R^2$} \\
\cmidrule(lr){2-5}\cmidrule(lr){6-9}
& \multicolumn{2}{c}{Joint E2} & \multicolumn{2}{c}{Intra E2}
& \multicolumn{2}{c}{Joint E2} & \multicolumn{2}{c}{Intra E2} \\
\cmidrule(lr){2-3}\cmidrule(lr){4-5}
\cmidrule(lr){6-7}\cmidrule(lr){8-9}
$k$
& Bregman & Euclidean & Bregman & Euclidean
& Bregman & Euclidean & Bregman & Euclidean \\
\midrule
1  & 0.997 & 0.997 & 0.997 & 0.997 & 0.993 & 0.993 & 0.993 & 0.993 \\
2  & 0.998 & 0.998 & 0.998 & 0.998 & 0.995 & 0.995 & 0.993 & 0.994 \\
4  & 0.998 & 0.998 & 0.998 & 0.998 & 0.996 & 0.996 & 0.994 & 0.994 \\
6  & 0.999 & 0.999 & 0.998 & 0.998 & 0.996 & 0.997 & 0.994 & 0.994 \\
8  & 0.999 & 0.999 & 0.998 & 0.998 & 0.997 & 0.997 & 0.993 & 0.994 \\
10 & 0.999 & 0.999 & 0.998 & 0.998 & 0.997 & 0.997 & 0.993 & 0.993 \\
\bottomrule
\end{tabular}
\end{table}

\begin{table}[htbp]
\centering\small
\setlength{\tabcolsep}{3.5pt}
\caption{Dogfish, $10\%$: component-count sweep on the same $100$ groups and
$100$ queries. Each entry reports the median over fixed test queries. The four variants
coincide at $k=1$.}
\label{tab:appendix-ksweep-dogfish10}
\begin{tabular}{@{}rcccccccc@{}}
\toprule
& \multicolumn{4}{c}{Spearman $\rho$} & \multicolumn{4}{c}{$R^2$} \\
\cmidrule(lr){2-5}\cmidrule(lr){6-9}
& \multicolumn{2}{c}{Joint E2} & \multicolumn{2}{c}{Intra E2}
& \multicolumn{2}{c}{Joint E2} & \multicolumn{2}{c}{Intra E2} \\
\cmidrule(lr){2-3}\cmidrule(lr){4-5}
\cmidrule(lr){6-7}\cmidrule(lr){8-9}
$k$
& Bregman & Euclidean & Bregman & Euclidean
& Bregman & Euclidean & Bregman & Euclidean \\
\midrule
1  & 0.985 & 0.985 & 0.985 & 0.985 & 0.946 & 0.946 & 0.946 & 0.946 \\
2  & 0.994 & 0.994 & 0.993 & 0.993 & 0.985 & 0.985 & 0.982 & 0.982 \\
4  & 0.998 & 0.998 & 0.994 & 0.994 & 0.995 & 0.996 & 0.986 & 0.987 \\
6  & 0.999 & 0.999 & 0.993 & 0.994 & 0.997 & 0.997 & 0.984 & 0.985 \\
8  & 0.999 & 0.999 & 0.992 & 0.992 & 0.997 & 0.997 & 0.983 & 0.983 \\
10 & 1.00  & 1.00  & 0.991 & 0.990 & 0.997 & 0.997 & 0.981 & 0.981 \\
\bottomrule
\end{tabular}
\end{table}

\subsection{Why retain and reweight both components?}
\label[appendix]{app:omission-results}

\paragraph{Distinct class-wise patterns.}
Using the profiles defined in~\cref{app:decomposition-details}, the displayed MNIST $10\%$ group has component sizes $5{,}997$ and $3$ and cosine similarity $-0.457$.
Across all $100$ groups, the median profile cosine is $-0.027$, the interquartile range is $[-0.250,0.165]$, and the 90th percentile is $0.359$.
Component size alone therefore does not describe its class-wise influence pattern.

\paragraph{Omission against the full-group reference.}
\Cref{tab:component-omission} retains both components, only the larger component, or only the smaller component, using the same full-group actual PBRF reference and fixed full-fit coefficients.
For Bregman Joint E2 ($k=2$), median raw $R^2$ falls from $0.995$ with both components to $0.636$ with large-only and $-0.009$ with small-only.
The full-versus-large-only comparison demonstrates the predictive contribution of the small component; full versus small-only demonstrates that of the large component.
The poor small-only prediction is not evidence that the small component can be discarded.
The same qualitative degradation occurs for GI and single-component E2 ($k=1$).
These results motivate retaining and reweighting both components rather than dropping one based on size.
They do not test a refitted one-component model or establish independence between components.

\begin{table}[htbp]
\centering\small
\setlength{\tabcolsep}{4pt}
\caption{MNIST $10\%$, Bregman Joint E2 ($k=2$): component omission against the same full-group PBRF reference using $100$ groups and $100$ queries. E2 coefficients are held at the full-fit values; they are not refitted after omission.}
\label{tab:component-omission}
\begin{tabular}{@{}llrr@{}}
\toprule
Coefficient model & Retained contributions & Spearman & $R^2$ \\
\midrule
\multirow{3}{*}{GI}
& Full       & 0.997 & 0.978 \\
& Large only & 0.794 & 0.630 \\
& Small only & 0.492 & -0.008 \\
\midrule
\multirow{3}{*}{Single-component E2 ($k=1$)}
& Full       & 0.997 & 0.993 \\
& Large only & 0.797 & 0.638 \\
& Small only & 0.492 & -0.002 \\
\midrule
\multirow{3}{*}{ Bregman Joint E2 ($k=2$)}
& Full       & 0.998 & 0.995 \\
& Large only & 0.797 & 0.636 \\
& Small only & 0.493 & -0.009 \\
\bottomrule
\end{tabular}
\end{table}

\clearpage
\subsection{Stage-wise computational savings}
\label[appendix]{app:runtime-results}

The costs in~\cref{eq:e2-clustering-complexity,eq:e2-edit-complexity} concern different stages; the measurement boundaries and hardware are specified in~\cref{app:e2-complexity}.
For MNIST $1\%$ at $k=2$, median partition-construction time decreases from $59.4$ seconds for Bregman to $0.277$ seconds for Euclidean clustering, a $99.5\%$ reduction from the two medians over $100$ matched groups.
This comparison excludes shared preparation and final component-response solves.

\Cref{tab:e2-cost} reports the instrumented edit-curvature intervals for the $49$ matched MNIST $10\%$ groups with fixed Euclidean directions on an RTX3090.
At $k=2$, Joint and Intra have medians of $0.193$ and $0.103$ seconds, respectively, a $46.6\%$ reduction.
The common inverse solves, full-source Gram computation, and coefficient solve are excluded from these intervals.
The measurements support lower cost for the replaced stages at fixed $k$, not an end-to-end speedup over single-component E2 ($k=1$).

\begin{table}[htbp]
\centering\small
\setlength{\tabcolsep}{4pt}
\caption{Median edit-curvature stage time (seconds), MNIST $10\%$, fixed Euclidean components, $49$ matched groups on an RTX3090. Reductions use the two medians. These are instrumented stage intervals, not end-to-end timings.}
\label{tab:e2-cost}
\begin{tabular}{@{}rrrr@{}}
\toprule
$k$ & Joint & Intra & Reduction (\%) \\
\midrule
2 & 0.193 & 0.103 & 46.6 \\
4 & 0.384 & 0.111 & 71.2 \\
10 & 0.973 & 0.136 & 86.0 \\
\bottomrule
\end{tabular}
\end{table}

\end{document}